\newtheorem{theorem}{Theorem}
\newtheorem{corollary}{Corollary}[theorem]
\theoremstyle{definition}
\newtheorem{definition}{Definition}
\icmltitlerunning{Supervised Tree-Wasserstein Distance}
\begin{document}
\twocolumn[
\icmltitle{Supervised Tree-Wasserstein Distance}



\icmlsetsymbol{equal}{*}

\begin{icmlauthorlist}
\icmlauthor{Yuki Takezawa}{univ,riken}
\icmlauthor{Ryoma Sato}{univ,riken}
\icmlauthor{Makoto Yamada}{univ,riken}
\end{icmlauthorlist}

\icmlaffiliation{univ}{Kyoto University}
\icmlaffiliation{riken}{RIKEN AIP}

\icmlcorrespondingauthor{Yuki Takezawa}{yuki-takezawa@ml.ist.i.kyoto-u.ac.jp}

\icmlkeywords{Machine Learning, ICML}

\vskip 0.3in
]



\printAffiliationsAndNotice{}  

\begin{abstract}
To measure the similarity of documents, the
Wasserstein distance is a powerful tool, but it requires a high computational cost.
Recently, for fast computation of the Wasserstein distance, 
methods for approximating the Wasserstein distance using a tree metric have been proposed. These tree-based methods
allow fast comparisons of a large number of documents; however, they
are unsupervised and do not learn task-specific distances.
In this work, we propose the Supervised Tree-Wasserstein (STW) distance, 
a fast, supervised metric learning method based on the tree metric.
Specifically, we rewrite the Wasserstein distance on the tree metric by the parent--child relationships of a tree,
and formulate it as a continuous optimization problem using a contrastive loss.
Experimentally, we show that the STW distance can be computed fast,
and improves the accuracy of document classification tasks.
Furthermore, the STW distance is formulated by matrix multiplications, runs on a GPU, and is suitable for batch processing.
Therefore, we show that the STW distance is extremely efficient 
when comparing a large number of documents.
\end{abstract}

\section{Introduction}
The Wasserstein distance is a powerful tool for measuring distances between distributions. 
It has recently been applied in many fields, such as feature matching \cite{sarlin2020superGlue, yabin2020semantic}, generative models \cite{kolouri2019sliced}, similarity metrics \cite{kusner2015from, gao2016supervised, yurochkin2019hierarchical}, and so on.

The Wasserstein distance can be computed by solving the optimal transport problem.
For similarity metrics of documents,
\citet{kusner2015from} proposed the Word Mover's Distance (WMD).
Given the word embedding vectors \cite{mikolov2013distributed} and a normalized bag-of-words, the 
WMD is the cost of the optimal transport 
between two documents in the word embedding space.
WMD has been used for document classification tasks
and has achieved high $k$-nearest neighbors ($k$NN) accuracy.

To solve the optimal transport problem, linear programming can be used.
However, using linear programming requires cubic time with respect to the number of data points \cite{pele2009fast}. 
\citet{cuturi2013sinkhorn} proposed to add entropic regularization to the optimal transport problem, 
which can be solved by using a matrix scaling algorithm in quadratic time.
To further reduce the computational cost of the optimal transport problem,
there are two main strategies.
(1) The first approach is to relax the constraint of the optimal transport problem. Specifically, 
\citet{kusner2015from} relaxed the constraints of the optimal transport problem and transport the mass of each coordinate to the nearest coordinate, called the Relaxed WMD (RWMD). 
\citet{kubilay2019linear} attached additional constraints to RWMD and proposed a more accurate approximation of WMD.
(2) The second approach is to construct a tree metric and compute the Wasserstein distance on the tree metric (tree-Wasserstein distance).
\citet{indyk2003fast} proposed a method to embed the coordinates into the tree metric, called Quadtree.
Recently, \citet{le2019tree} proposed a method to sample tree metrics and achieved a high accuracy in document classification tasks.
\citet{arturs2020scalable} proposed a more accurate method than Quadtree.
These tree-based methods aim to approximate the Wasserstein distance on the Euclidean metric with the tree-Wasserstein distance.
The tree-Wasserstein distance can be computed in linear time with respect to the number of nodes in the tree 
and can quickly compare a large number of documents. 

In general, the similarity between documents must be designed in a task-specific manner.
However, the methods mentioned above are unsupervised and do not learn task-specific distances.
\citet{gao2016supervised} proposed supervised metric learning based on WMD, called Supervised WMD (S-WMD).
S-WMD learns a task-specific distance by leveraging the label information of documents, and improves the $k$NN accuracy. 
However, it requires quadratic time to compute S-WMD
and there is no supervised metric learning for the tree-Wasserstein distance.
Moreover, for the tree-Wasserstein distance, it is challenging to construct the tree metric by leveraging the label information of documents.

In this work, 
we propose the \textit{Supervised Tree-Wasserstein} (STW) distance,
a fast supervised metric learning method for the tree metric.
To this end, we propose the \textit{soft tree-Wasserstein distance}, which is a soft variant of the tree-Wasserstein distance.
Specifically, we rewrite the tree-Wasserstein distance by the probability of the parent–-child relationships of a tree.
We then consider learning the probability of the parent--child relationships of a tree by leveraging the label information of documents.
By virtue of the soft tree-Wasserstein distance, the STW distance is end-to-end trainable using backpropagation 
and is formulated only by matrix multiplications, which can be implemented with simple operations on a GPU. 
Thus, the STW distance is suitable for batch processing and can simultaneously compare multiple documents.
Through synthetic and real-world experiments on document classification tasks, 
we show that the STW distance
can build a tree that represents the task-specific distance and has improved accuracy.
Furthermore, we show that the STW distance is more efficient than the existing methods for computing Wasserstein distances,
especially when comparing a large number of documents.

Our contributions are as follows:
\begin{itemize}
    \item We propose a soft variant of the tree-Wasserstein distance, which is differentiable with respect to the probability of the parent--child relationships of a tree. It can be computed by simple operations on a GPU and is suitable for batch processing.
    \item Using the soft variant of the tree-Wasserstein distance, we propose fast supervised metric learning for a tree metric, which is formulated as a continuous optimization problem.  
    \item Experimentally, we show that our method is fast and improves the accuracy of document classification tasks.
\end{itemize}

\textbf{Notation:}
In the following sections,
we write $\mathbf{1}_n$ for an $n$-dimensional vector with all ones, 
$\mathbf{0}_n$ for an $n$-dimensional vector with all zeros,
$\mathbf{I}$ for the identity matrix,
and $\delta$ for the Dirac delta function.

\section{Related Work}
\label{sec:related_work}
In this section, we introduce the existing Wasserstein distances and the methods for continuous optimization for learning a tree structure,
and then present their drawbacks.
 
\subsection{Wasserstein Distances}
Given a simplex $\mathbf{a} \in \mathbb{R}_{+}^{n}$ and $\mathbf{b} \in \mathbb{R}_{+}^{m}$, 
we write $U(\mathbf{a}, \mathbf{b})$ for the transport polytope of $\mathbf{a}$ and $\mathbf{b}$ as follows: 
\begin{equation*}
    U(\mathbf{a}, \mathbf{b}) = \{ \mathbf{T} \in \mathbb{R}_{+}^{n \times m} \; | \; \mathbf{T} \mathbf{1}_m = \mathbf{a}, \mathbf{T}^\top \mathbf{1}_n = \mathbf{b} \},
\end{equation*}
Given a cost $c(\mathbf{x}_i, \mathbf{x}_j)$ between coordinates $\mathbf{x}_i$ and $\mathbf{x}_j$,
the optimal transport problem between $\mathbf{a}$ and $\mathbf{b}$ is defined as follows:
\begin{equation*}
    \min_{\mathbf{T} \in U(\mathbf{a}, \mathbf{b})}  \sum_{i,j} \mathbf{T}_{i,j} \; c(\mathbf{x}_i, \mathbf{x}_j).
\end{equation*}
If $c(\mathbf{x}_i, \mathbf{x}_j)$ is a metric, then
the cost of the optimal transport is a metric, which is a special case of Wasserstein distances.

In document classification tasks, 
given word embedding vectors $\mathbf{x}_i$ and $\mathbf{x}_j$,
\citet{kusner2015from} defined the cost $c(\mathbf{x}_i, \mathbf{x}_j) = \| \mathbf{x}_i - \mathbf{x}_j \|^2_2$
and simplex $\mathbf{a}$ and $\mathbf{b}$ as the normalized bag-of-words,
and proposed to use the optimal transport cost as the dissimilarity of documents, called Word Mover's Distance (WMD).
To further improve the classification accuracy,
\citet{gao2016supervised} proposed supervised metric learning based on WMD, called Supervised WMD (S-WMD).
S-WMD transforms word embedding vectors and re-weights the bag-of-words via supervised learning.

To solve the optimal transport problem, linear programming can be used.
However, using linear programming requires cubic time with respect to the number of coordinates \cite{pele2009fast}. 
To reduce this time complexity, 
\citet{cuturi2013sinkhorn} proposed the entropic regularized optimal transport, which is called the Sinkhorn algorithm
and can be solved in quadratic time.

\textbf{Tree-Wasserstein Distances:}
Given a tree $\mathcal{T} = (\bm{V}, \bm{E})$ rooted at $v_1$ with non-negative edge lengths, the tree metric $d_{\mathcal{T}}$ between two nodes is the total length of the path between the nodes.
Let $\Gamma(v)$ be a set of nodes contained in the subtree of $\mathcal{T}$ rooted at $v \in \bm{V}$.
For all $v \in \bm{V} \setminus \{ v_1 \}$, there exists a unique node $u \in \bm{V}$ which is the parent node of $v$
and we write $w_v$ for the length of the edge from $v$ to its parent node.
Given two measures $\mu$ and $\nu$ supported on $\mathcal{T}$,
the tree-Wasserstein distance between $\mu$ and $\nu$ is calculated as follows:
\begin{equation}
    W_{d_{\mathcal{T}}}(\mu, \nu) = \sum_{v \in \bm{V} \setminus \{ v_1\}} w_v \left| \mu(\Gamma(v)) - \nu(\Gamma(v)) \right|.
\label{eqn:tree_wasserstein}
\end{equation}
The parent node of the root $v_1$ does not exist, and the length of the edge $w_{v_1}$ is not defined.
However, because $\mu(\Gamma(v_1)) = \nu(\Gamma(v_1)) = 1$, we define $w_{v_1}=1$ for simplicity; 
the tree-Wasserstein distance can be written as $W_{d_{\mathcal{T}}}(\mu, \nu) = \sum_{v \in \bm{V}} w_v \left| \mu(\Gamma(v)) - \nu(\Gamma(v)) \right|$.
The key property of the tree-Wasserstein distance is that it can be computed in linear time with respect to the number of nodes.
Furthermore, the tree-Wasserstein distance between $\mu$ and $\nu$ is regarded as the L1 distance between their corresponding $|\bm{V}|$-dimensional vectors whose elements corresponding to $v$ are $w_v \mu(\Gamma(v))$ and $w_v \nu(\Gamma(v))$.
In practice, these embedding vectors are sparse. This allows for faster implementation \cite{arturs2020scalable}.
In the unbalanced setting,
\citet{sato2019unbalanced} proposed a method to compute the tree-Wasserstein distance in quasi-linear time.

To compute the tree-Wasserstein distance, we need to construct a tree metric.
\citet{indyk2003fast} proposed a method to embed the coordinates into the tree metric in the context of image retrieval, which is called Quadtree.
\citet{le2019tree} proposed the tree-sliced Wasserstein (TSW) distance, which is a variant of the sliced-Wasserstein distance \cite{rabin2011wasserstein, kolouri2018sliced, kolouri2019generalized, deshpande2019max}.
The TSW distance is the average of the tree-Wasserstein distances on the sampled tree metrics.
Recently, \citet{arturs2020scalable} proposed Flowtree,
which computes the optimal flow on Quadtree,
then computes the cost of the optimal flow on the ground metric, unlike Quadtree and the TSW distance.
Flowtree is slower than Quadtree in computing the optimal flow, 
but can theoretically approximate the Wasserstein distance more accurately.
These previous works aimed to approximate the Wasserstein distance on the Euclidean metric with the tree-Wasserstein distance.
In contrast to these previous works, 
our goal is not to approximate the ground metric, but to construct a tree metric that represents the task-specific distance 
by leveraging the label information of the documents; so that the tree-Wasserstein distance between documents with the same label is small, and the tree-Wasserstein distance between documents with different labels is large.

\subsection{Continuous Optimization for a Tree}
When solving the task of learning a tree structure as a continuous optimization problem, 
learning in hyperbolic space is highly related.
Hyperbolic space has a property that is similar to that of a tree, where the volume increases exponentially with the radius, 
and the number of nodes increases exponentially with the depth of the tree.
Using this property, 
various methods that solve continuous optimization to learn a tree structure by representing the nodes with coordinates in hyperbolic space have been proposed \cite{nickel2017poincare, ganea2018hyperbolic}.
In hierarchical clustering, 
\citet{monath2019gradient, chami2020from} 
formulated the probability or the coordinates of the lowest common ancestors in hyperbolic space 
and constructed a tree by minimizing a soft variant of Dasgupta's cost \cite{dasgupta2016cost}, 
which is the well-known cost for hierarchical clustering.
However, these methods are not applicable to the tree-Wasserstein distance 
because it is necessary to formulate whether a node is contained in a subtree (i.e., $\Gamma(v)$). 
In contrast to these works,
we introduce the conditions of an adjacency matrix to be the adjacency matrix of a tree,
formulate the probability that a node is contained in a subtree, and
then propose a continuous optimization problem with respect to the adjacency matrix.

\section{Proposed Method}
\label{sec:proposed_method}
In this section, 
we first introduce a soft variant of the tree-Wasserstein distance;
then we propose the STW distance.

\subsection{Problem Setting}
We have a finite size vocabulary set  $\bm{Z}=\{ z_1, z_2, \ldots, z_{N_{\text{leaf}}}\}$ consisting of $N_{\text{leaf}}$ words
and a training dataset $\mathcal{D} = \{(\mathbf{a}_i, y_i)\}_{i=1}^{M}$ 
where $N_{\text{leaf}}$-dimensional vector $\mathbf{a}_i = (a_i^{(1)}, a_i^{(2)}, \ldots, a_i^{(N_{\text{leaf}})})^\top \in [0,1]^{N_{\text{leaf}}}$ is the normalized bag-of-words (i.e., $\mathbf{a}_i^\top \mathbf{1}_{N_{\text{leaf}}} =1$), and $y_i \in \mathbb{N}$ is a label of document $i$.
In the following sections, we assign words to leaf nodes of the tree, as in Quadtree and the TSW distance.
We refer to the nodes corresponding to each word as \textit{\text{leaf}} nodes
and the nodes not corresponding to any word as \textit{internal} nodes. 
Note that leaf nodes have no child nodes, but there may be internal nodes that do not have child nodes.
To construct the tree metric by leveraging the label information of documents, 
assume that we have a set of nodes $\bm{V} = \{v_1, v_2, \ldots , v_N\}$, in which $v_1$ is the root.
We consider constructing the tree metric by learning the parent–child relationships of these nodes.
Let $N_{\text{in}}$ be the number of internal nodes ($N = N_{\text{in}} + N_{\text{leaf}}$). 
$\bm{V}_{\text{in}} = \{v_1, v_2, \ldots, v_{N_{\text{in}}} \}$ is a set of internal nodes. 
$\bm{V}_{\text{leaf}} = \{v_{N_{\text{in}} + 1}, \ldots, v_{N} \}$ is a set of leaf nodes. 
$w_v$ is the length of an edge from $v$ to the parent node of $v$.
For simplicity, we define $w_{v_1}=1$.
We assume that the word $z_i$ corresponds to $v_{N_{\text{in}} + i}$.
We denote the training dataset using the discrete measure $\mathcal{D} = \{ (\mu_i, y_i)\}_{i=1}^M$,
where $\mu_i = \sum_j a_i^{(j)} \delta({v_{N_{\text{in}} + j}}, \cdot)$ is the discrete measure that represents the document $i$.

\subsection{Soft Tree-Wasserstein Distance}
Our goal is to construct a tree metric such that 
the tree-Wasserstein distance between documents with the same label is small and 
the distance between documents with different labels is large.
To achieve this, we first show the conditions of the parent--child relationships of a tree,
formulate the probability that a node is contained in a subtree using these conditions,
and then propose a soft variant of the tree-Wasserstein distance.

The parent--child relationships of a tree with a specific root can be represented by the adjacency matrix of the directed tree, which has edges from child nodes to their parent nodes.
We show the conditions for an adjacency matrix to be an adjacency matrix of a tree.
\begin{theorem}
If the adjacency matrix $\mathbf{D}_{\text{par}} \in \{0, 1\}^{N \times N}$ of a directed graph $G=(\bm{V}, \bm{E})$ satisfies the following conditions:
\begin{enumerate}
\renewcommand{\labelenumi}{(\arabic{enumi})}
\setlength{\itemsep}{0.1cm}
    \item $\mathbf{D}_{\text{par}}$ is a strictly upper triangular matrix. \label{cond:strictly_upper}
    \item $\mathbf{D}_{\text{par}}^\top \mathbf{1}_N = (0, 1, \cdots ,1)^\top$. \label{cond:eqn}
\end{enumerate}
then $G$ is a directed tree with $v_1$ as the root.
\label{th:condition_of_adjacency_matrix}
\end{theorem}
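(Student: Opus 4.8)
The plan is to convert the two algebraic conditions into combinatorial statements about $G$ and then appeal to the elementary fact that a connected graph on $N$ vertices with exactly $N-1$ edges is a tree. Reading condition~(\ref{cond:eqn}): since $\mathbf{D}_{\text{par}}^\top \mathbf{1}_N$ is the vector of column sums of $\mathbf{D}_{\text{par}}$, the condition says that column $1$ sums to $0$ while every other column sums to $1$. Thus for each $j \ge 2$ there is a unique row index $p(j)$ with $(\mathbf{D}_{\text{par}})_{p(j),j} = 1$, so that $v_{p(j)}$ is the parent of $v_j$, while the single column summing to $0$ identifies $v_1$ as the unique parentless node. Summing all entries also records the edge count as $\mathbf{1}_N^\top \mathbf{D}_{\text{par}}^\top \mathbf{1}_N = N-1$. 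From condition~(\ref{cond:strictly_upper}) I would extract two facts: the zero diagonal forbids self-loops, and strict upper-triangularity forces $p(j) < j$, so every node's parent carries a strictly smaller index.

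The core of the proof is a descent on the indices. Fixing any node $v_k$ and iterating the parent map yields $k > p(k) > p(p(k)) > \cdots$, a strictly decreasing sequence of positive integers that must terminate after finitely many steps. It can only stop at a node possessing no parent, and by condition~(\ref{cond:eqn}) the sole such node is $v_1$. Hence every vertex is joined to $v_1$ by a directed path, so $G$ is connected.

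To finish, I would combine connectivity with the edge count: a connected graph on $N$ vertices having exactly $N-1$ edges is a tree, and since $v_1$ is the unique parentless vertex it is the root, so $G$ is a directed tree rooted at $v_1$. Alternatively, the terminating descent already gives each vertex a unique directed path to $v_1$, which is itself the definition of a rooted tree and renders acyclicity automatic.

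The step needing the most care is the interplay of the two conditions in this descent. Condition~(\ref{cond:eqn}) by itself only guarantees one parent per non-root vertex and would still permit the parent edges to close up into a directed cycle; it is precisely the strict upper-triangularity of condition~(\ref{cond:strictly_upper}) that forbids such cycles and forces every parent chain to bottom out at $v_1$. Confirming that these two conditions jointly do the work of the usual separate hypotheses --- one parent per non-root vertex, together with connectivity and acyclicity --- is the crux; the surrounding bookkeeping (the edge count and the absence of self-loops) is routine.
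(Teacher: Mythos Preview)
Your argument is correct. Both you and the paper extract the same two combinatorial facts from the hypotheses --- condition~(\ref{cond:eqn}) gives each $v_j$ with $j\ge 2$ a unique parent and leaves $v_1$ parentless, while condition~(\ref{cond:strictly_upper}) forces parents to carry strictly smaller indices --- but you finish differently. The paper invokes two results from \cite{korte2007combinatorial}: the existence of a topological order (guaranteed by strict upper-triangularity) implies acyclicity, and then a cited characterization (a digraph in which the root has no outgoing edge, every other vertex has a unique outgoing edge, and there are no circuits is a rooted directed tree) closes the argument. You instead run an explicit descent on indices to exhibit a directed path from every vertex to $v_1$, establishing connectivity directly, and then appeal to the elementary ``connected with $N-1$ edges'' criterion. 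Your route is more self-contained and makes the role of the index ordering vivid; the paper's route is terser by outsourcing the structural step to a textbook lemma. Either way the crux is the same: condition~(\ref{cond:eqn}) alone would permit a disjoint directed cycle, and it is precisely the monotone-index constraint from condition~(\ref{cond:strictly_upper}) that rules this out.
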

Appendix details the proof.
To introduce a soft variant of the tree-Wasserstein distance, we relax
$\mathbf{D}_{\text{par}} \in \{0, 1\}^{N \times N}$ to $\mathbf{D}_{\text{par}} \in \left[0, 1\right]^{N \times N}$ while satisfying the conditions of Theorem \ref{th:condition_of_adjacency_matrix}.
In $\mathbf{D}_{\text{par}}$, 
the elements in the first column are all zero; 
in the second and subsequent columns, 
the sum of the elements in each column is one.
In other words, the element in the $i$-th row and $j$-th column of $\mathbf{D}_{\text{par}}$ is the probability that $v_i$ is a parent of $v_j$.
The elements in the $i$-th row and $j$-th column of $\mathbf{D}^{k}_{\text{par}}$ denotes the probability 
that there exists a path from $v_j$ to $v_i$ with $k$ steps.
The element in the $i$-th row and $j$-th column of the sum of the infinite geometric series is 
the probability that there exists a path from $v_j$ to $v_i$.
In other words,
it means the probability that $v_j$ is contained in the subtree rooted at $v_i$.
We refer to this probability as $P_{\text{sub}}(v_j | v_i)$ and define it as follows:
\begin{align}
\begin{split}
   P_{\text{sub}}(v_j | v_i) &= \left[ \sum_{k=0}^\infty \mathbf{D}_{\text{par}}^k \right]_{i, j} = \left[ (\mathbf{I} - \mathbf{D}_{\text{par}})^{-1} \right]_{i,j}.
\label{eqn:psub}
\end{split}
\end{align}
$\mathbf{D}_{\text{par}}$ is a nilpotent matrix
because it is an upper triangular matrix and all the diagonal elements are zero.
Therefore, the sum of the infinite geometric series converges to $(\mathbf{I} - \mathbf{D}_{\text{par}})^{-1}$.
We show more details in the Appendix.
By using this probability,
we define the \textit{soft tree-Wasserstein distance} $W_{d_{\mathcal{T}}}^{\text{soft}} (\mu_i, \mu_j)$ as follows:
\begin{multline}
\label{eqn:soft_tree_wasserstein}
    W_{d_{\mathcal{T}}}^{\text{soft}} (\mu_i, \mu_j) \\
    = \sum_{v \in \bm{V}} w_v \left| \sum_{x \in \bm{V}_{\text{leaf}}}\!\!\!P_{\text{sub}}(x | v) \left( \mu_i (x) - \mu_j(x) \right) \right|_{\alpha}, 
\end{multline}
where $| \cdot |_{\alpha}$ is a smooth approximation of the L1 norm, defined as follows:
\begin{equation*}
    | x |_\alpha = \frac{x (e^{\alpha x} - e^{- \alpha x})}{2 + e^{\alpha x} + e^{- \alpha x}}.
\end{equation*}
It has been shown that if $\alpha$ approaches $\infty$, then $|\cdot|_{\alpha}$ converges to the L1 norm \cite{smoothabs}. 
Other differentiable approximations for the L1 norm can also be used.
The soft tree-Wasserstein distance satisfies the identity of indiscernibles and the symmetry,
but does not satisfy the triangle inequality, because $| \cdot |_{\alpha}$ does not satisfy the triangle inequality.
Thus, the soft tree-Wasserstein distance is not a metric. 
However, the soft tree-Wasserstein distance satisfies the following theorem;
the proof is shown in the Appendix.
\begin{theorem}
If the tree metric is given and $\alpha$ approaches $\infty$, then the soft tree-Wasserstein distance converges to the tree-Wasserstein distance.
\label{th:soft}
\end{theorem}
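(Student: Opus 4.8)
The plan is to reduce the soft distance to the hard one in two stages: first to show that, when the underlying tree is fixed and $\mathbf{D}_{\text{par}}$ is the genuine $\{0,1\}$-valued adjacency matrix satisfying Theorem~\ref{th:condition_of_adjacency_matrix}, the subtree probability $P_{\text{sub}}(x \mid v)$ collapses to the indicator $\mathbb{1}[x \in \Gamma(v)]$; and second to invoke the pointwise convergence $|\cdot|_\alpha \to |\cdot|$ to pass to the limit in the finite sum over $\bm{V}$.

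For the first stage I would exploit the combinatorial meaning of the matrix powers. Since $\mathbf{D}_{\text{par}}$ has $\{0,1\}$ entries, $[\mathbf{D}_{\text{par}}^k]_{i,j}$ counts the number of directed walks of length $k$ from $v_j$ to $v_i$ along the child-to-parent edges. In a tree every non-root node has exactly one parent, so starting from $v_j$ and following parent edges $k$ times reaches at most one node; hence $[\mathbf{D}_{\text{par}}^k]_{i,j} \in \{0,1\}$, and it equals $1$ precisely when $v_i$ is the $k$-th ancestor of $v_j$. Because the path from any node up to the root is unique, at most one value of $k$ contributes to the sum, so by Equation~\eqref{eqn:psub} the geometric series (which converges, indeed terminates, since $\mathbf{D}_{\text{par}}$ is nilpotent) gives $P_{\text{sub}}(x \mid v) = \sum_{k \ge 0} [\mathbf{D}_{\text{par}}^k]_{v,x} = \mathbb{1}[x \in \Gamma(v)]$.

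Substituting this indicator into the inner sum of Equation~\eqref{eqn:soft_tree_wasserstein}, and using that each $\mu_i$ is supported on the leaf nodes so that $\mu_i(\Gamma(v)) = \sum_{x \in \bm{V}_{\text{leaf}}} \mathbb{1}[x \in \Gamma(v)]\,\mu_i(x)$, I would identify the argument of $|\cdot|_\alpha$ with the exact difference $\mu_i(\Gamma(v)) - \mu_j(\Gamma(v))$. The soft distance then reduces to $\sum_{v \in \bm{V}} w_v \left| \mu_i(\Gamma(v)) - \mu_j(\Gamma(v)) \right|_\alpha$.

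Finally, since $\bm{V}$ is finite and each summand's argument is a fixed real number independent of $\alpha$, I would take the limit $\alpha \to \infty$ termwise, using the established fact that $|x|_\alpha \to |x|$ to recover $\sum_{v \in \bm{V}} w_v \left| \mu_i(\Gamma(v)) - \mu_j(\Gamma(v)) \right| = W_{d_{\mathcal{T}}}(\mu_i, \mu_j)$, which is the tree-Wasserstein distance of Equation~\eqref{eqn:tree_wasserstein}. I expect the main obstacle to be the first stage, namely arguing rigorously that the matrix geometric series reproduces the subtree indicator; the crux there is the uniqueness of the ancestor path in a tree, which simultaneously guarantees that every entry of every $\mathbf{D}_{\text{par}}^k$ is $0$ or $1$ and that at most one power contributes to each entry of the sum. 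The remaining two steps are routine.
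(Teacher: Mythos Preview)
Your proposal is correct and follows essentially the same route as the paper's own proof: both argue that the powers $\mathbf{D}_{\text{par}}^k$ count child-to-parent walks, use uniqueness of the ancestor path in a tree to conclude that $P_{\text{sub}}(x\mid v)=\mathbb{1}[x\in\Gamma(v)]$, rewrite the inner sum as $\mu_i(\Gamma(v))-\mu_j(\Gamma(v))$ via the leaf support of the measures, and then pass to the limit $\alpha\to\infty$ termwise in the finite sum. If anything, your write-up is slightly more explicit than the paper's about why at most one power of $k$ contributes and why termwise convergence is justified.
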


\subsection{Fast Computation Method}
Because the size of $\mathbf{D}_{\text{par}}$ is large, 
calculating the inverse matrix in Eq. \eqref{eqn:psub} has high computational cost and memory consumption.
Next, we introduce a method to reduce this cost by utilizing the property of $\mathbf{D}_{\text{par}}$.

We arranged the index of nodes such that the index of an internal node was less than the index of a leaf node.
As pointed out earlier, leaf nodes have no child nodes.
Then, the lower block of $\mathbf{D}_{\text{par}}$ is a zero matrix 
and $\mathbf{D}_{\text{par}}$ can be partitioned into four blocks as follows:
\begin{equation}
    \mathbf{D}_{\text{par}} = \begin{pmatrix}
    \mathbf{D}_1 & \mathbf{D}_2 \\
    \bm{0} & \bm{0}
    \end{pmatrix},
\label{eqn:block}
\end{equation}
where $\mathbf{D}_1$ is an $N_{\text{in}} \times N_{\text{in}}$ matrix, and $\mathbf{D}_2$ is an $N_{\text{in}} \times N_{\text{leaf}}$ matrix.
$\mathbf{D}_1$ denotes the parent--child relationships of a tree consisting of internal nodes, 
and $\mathbf{D}_2$ represents which internal nodes the leaf nodes connect to.
Utilizing this property and the constraints of $\mathbf{D}_{\text{par}}$,
we can calculate the inverse matrix as follows:
\begin{equation}
    (\mathbf{I} - \mathbf{D}_{\text{par}})^{-1} = \begin{pmatrix}
    (\mathbf{I} - \mathbf{D}_1)^{-1} & (\mathbf{I} - \mathbf{D}_1)^{-1}\mathbf{D}_2 \\
    \bm{0} & \mathbf{I}
    \end{pmatrix},
\label{eqn:fast_inverse}
\end{equation}
where $\mathbf{I} - \mathbf{D}_1$ is a regular matrix, and there exists an inverse matrix
because $\mathbf{D}_1$ is an upper triangular matrix, and all diagonal elements are zero.
The bottom two blocks do not need to be retained because they are not learned
and we can reduce the memory consumption. 
Since $N_{\text{in}}$ is, in general, $150$ to $4000$, 
the computation of the inverse matrix $(\mathbf{I} - \mathbf{D}_1)^{-1}$ is not expensive.
Thus, we can reduce the computational cost and memory consumption.

\subsection{Supervised Tree-Wasserstein Distancce}
Our goal is to construct a tree metric such that 
the tree-Wasserstein distance between documents with the same label is small and 
the tree-Wasserstein distance between documents with different labels is large.
To achieve this, we use a contrastive loss similar to prior works \cite{raia2006dimensionality} as follows:
\begin{dmath*}
    \mathcal{L}(\mathbf{D}_{\text{par}}, \mathbf{w}_v) = 
    \frac{1}{|\mathcal{D}_p|} \sum_{(i, j) \in \mathcal{D}_p} W^{\text{soft}}_{d_{\cal{T}}} (\mu_i, \mu_j) - \frac{1}{|\mathcal{D}_n|} \sum_{(i, j) \in \mathcal{D}_n} \!\!\min \left\{ W^{\text{soft}}_{d_{\cal{T}}} (\mu_i, \mu_j), m \right\},
\end{dmath*}
where $\mathbf{w}_v = (w_{v_1}, w_{v_2}, \cdots, w_{v_N})^\top$ is an $N$-dimensional vector,
$\mathcal{D}_p = \{(i, j) | y_i = y_j\}$ is a set of index pairs of documents that have the same label, 
$\mathcal{D}_n = \{(i, j) | y_i \not = y_j\}$ is a set of index pairs of documents that have different labels, and $m$ is the margin.

However, it is difficult to minimize this loss function with respect to $\mathbf{D}_{\text{par}}$ and $\mathbf{w}_v$
because the joint optimization of $\mathbf{D}_1$, $\mathbf{D}_2$, and $\mathbf{w}_v$ has too many degrees of freedom.
To solve this problem, 
we propose initializing $\mathbf{D}_1$ as an adjacency matrix of a tree consisting of internal nodes
and $\mathbf{w}_v=\mathbf{1}_N$,
fix $\mathbf{D}_1$ and $\mathbf{w}_v$ at the initial value,
and minimize the loss with respect to only $\mathbf{D}_2$.
In other words, given a tree $\mathcal{T}^{\prime} = (\bm{V}_{\text{in}}, \bm{E}_{\text{in}})$ whose adjacency matrix is $\mathbf{D}_1$
and edge lengths are all one, 
we optimize where to connect leaf nodes to $\mathcal{T}^{\prime}$.
As a by-product, the inverse matrix in Eq. (\ref{eqn:fast_inverse}) needs to be calculated only once before training. 
To optimize the loss function while satisfying the conditions of Theorem \ref{th:condition_of_adjacency_matrix}, 
we propose to calculate $\mathbf{D}_2$ using the softmax function as follows:
\begin{align*}
    [\mathbf{D}_2]_{i,j} &= \frac{\exp\left([\bm{\Theta}]_{i,j}\right)}{\sum_{i' = 1}^{N_{\text{in}}}\exp\left([\bm{\Theta}]_{i',j}\right)},
\end{align*}
where $\bm{\Theta} \in \mathbb{R}^{N_{\text{in}} \times N_{\text{leaf}}}$ is the parameter to be optimized.
Using the softmax function,
$\mathbf{D}_2^\top \mathbf{1}_{N_{\text{in}}} = \mathbf{1}_{N_{\text{leaf}}}$
and $\mathbf{D}_1$ is initialized such that $\mathbf{D}_1^\top \mathbf{1}_{N_{\text{in}}} = (0, 1, \cdots, 1)^\top$;
then $\mathbf{D}_1$ and $\mathbf{D}_2$ satisfy the conditions of Theorem \ref{th:condition_of_adjacency_matrix}.
Note that other softmax-like functions can also be used \cite{martins2016from, kong2019rankmax} 
as long as the constraint that the sum is one is satisfied.
In summary, our optimization problem is given as follows:
\begin{align}
\label{eqn:objective}
    \min_{\bm{\Theta} \in \mathbb{R}^{N_{\text{in}} \times N_{\text{leaf}}}} \mathcal{L}(\mathbf{D}_{\text{par}}, \mathbf{w}_v),
\end{align}
where $\mathbf{D}_1$ is fixed at initial values and $\mathbf{w}_v=\mathbf{1}_N$.
Since this objective function is differentiable with respect to $\mathbf{\Theta}$,
we can optimize it by stochastic gradient descent. 
After optimization,
for each leaf node, we select one of the most probable parents and construct the tree metric:
\begin{align*}
    \mathbf{D}_2^\ast & = (\mathbf{e}_1, \mathbf{e}_2, \ldots, \mathbf{e}_{N_{\text{leaf}}}) \in \{0, 1\}^{N_{\text{in}} \times N_{\text{leaf}}},
\end{align*}
where $\mathbf{e}_j \in \{0,1\}^{N_{\text{in}}}$ is the one-hot vector whose $k^\ast = \text{argmax}_{k} [\mathbf{D}_2]_{k,j}$ th element is one and the other elements are zero.
We substitute $\mathbf{D}_2^\ast$ and $\mathbf{D}_1$ in Eq. \eqref{eqn:block} and obtain the tree metric that represents the task-specific distance.
We refer to this approach as the \textit{Supervised Tree-Wasserstein} (STW) distance. 

The tree-Wasserstein distance between $\mu_i$ and $\mu_j$ can be considered as the L1 distance between their corresponding vectors.
Using the formulation of the soft tree-Wasserstein distance, the
tree-Wasserstein distance can be computed as the L1 norm of the following vector:
\begin{align*}
    \mathbf{w}_v
    \circ
    \left\{
    \begin{pmatrix}
    (\mathbf{I} - \mathbf{D}_1)^{-1} & (\mathbf{I} - \mathbf{D}_1)^{-1}\mathbf{D}_2^\ast \\
    \bm{0} & \mathbf{I}
    \end{pmatrix} 
    \left( 
    \begin{array}{c}
    \mathbf{0}_{N_{\text{in}}} \\
    \mathbf{a}_i - \mathbf{a}_j
    \end{array} \right) \right\},
\end{align*}
where $\circ$ is the element-wise Hadamard product.
As can be seen above, this formulation can be generalized to the case of comparing one document $\mathbf{a}_1$ with $M-1$ documents $\mathbf{a}_2, \mathbf{a}_3, \ldots, \mathbf{a}_M$.
Then $M-1$ documents can be compared simultaneously by replacing the right vector in the above equation with
$\left( \begin{array}{ccc}
    \mathbf{0}_{N_{\text{in}}} & \cdots & \mathbf{0}_{N_{\text{in}}} \\
    \mathbf{a}_2  -\mathbf{a}_1             & \cdots & \mathbf{a}_M - \mathbf{a}_1 
    \end{array} \right)$.
Therefore, the STW distance can be computed on a GPU and can compare multiple documents simultaneously.

\subsection{Implementation Details}
We initialize $\mathbf{D}_1$ such that the tree $\mathcal{T}^{\prime}$ with this adjacency matrix is a perfect $k$-ary tree of depth $d$.
We show the pseudo-code of the STW distance for inference in Algorithm \ref{alg:implementation}.
In practice, lines 4--7 need to be computed only once before inference.
During training, we skip line 6, 
use the approximation of the L1 norm in line 8,
compute the loss, and update the parameter $\mathbf{\Theta}$.
Since all operations can run on a GPU and are differentiable,
we can optimize $\mathbf{\Theta}$ using backpropagation and mini-batch stochastic
gradient descent.
This can be easily extended to an implementation that is suitable for batch processing.
We found that when the number of unique words contained in a document is large, the optimization is difficult because the elements of the normalized bag-of-words reach zero.
To address this issue, we multiply a fixed value $5$ to $\mathbf{a}$ in Algorithm \ref{alg:implementation} during training. 

For all $v_i \in \bm{V}_{\text{in}}$ and $v_j \in \bm{V}_{\text{leaf}}$,
the number of nodes contained in a path from $v_j$ to $v_i$ is at most $d+2$.
If a node $v_{j + N_{\text{in}}}$ is contained in the subtree rooted at $v_i$, 
then $[\mathbf{C}]_{i, j}$ is one, and is zero otherwise.
Therefore, $\mathbf{C}$ is a sparse matrix that has at most $(d+1) \times N_{\text{leaf}}$ non-zero elements,
and $\mathbf{a}$ is a sparse vector because $s \ll N_{\text{leaf}}$, 
where $s$ denotes the number of unique words contained in the two documents to be compared.
In general, since GPUs are not suitable for multiplications of sparse matrices,
it is faster to compute them as multiplications of dense matrices when computing on a GPU.
In the following experiments, we evaluate the STW distance on a GPU as multiplications of dense matrices.
However, when run on a CPU, it can be computed in $O(sd)$ by using this sparsity.

\begin{algorithm}[tb]
   \caption{Implementation of the STW distance, using PyTorch syntax.}
   \label{alg:implementation}
\begin{algorithmic}[1]
   \STATE {\bfseries Input:}  normalized bag-of-words $\mathbf{a}_i$, $\mathbf{a}_j$, $\mathbf{w}_v=\mathbf{1}_N$.
   \STATE {\bfseries Output:} tree-Wasserstein distance between $\mathbf{a}_i$ and $\mathbf{a}_j$.
   \STATE $\mathbf{a} = \mathbf{a}_i - \mathbf{a}_j$
   \STATE $\mathbf{A} = (\mathbf{I} - \mathbf{D}_1)^{-1}$
   \STATE $\mathbf{D}_2$ = softmax($\bm{\Theta}$, dim=0)
   \STATE $\mathbf{D}_2^\ast = \mathbf{D}_2.\text{ge(}\mathbf{D}_2\text{.max(}0, \text{keepdim=True)[0]).float()}$
   \STATE $\mathbf{C}$ = mm($\mathbf{A}$, $\mathbf{D}_2^\ast$)
   \STATE {\bfseries return} abs(mv($\mathbf{C}$, $\mathbf{a}$)).sum() + abs($\mathbf{a}$).sum()
\end{algorithmic}
\end{algorithm}
\section{Experimental Results}
We evaluate the following methods in document classification tasks on the synthetic and six real datasets following S-WMD
in the test error rate of the $k$-nearest neighbors ($k$NN) and the time consumption: 
\textsc{TWITTER}, \textsc{AMAZON}, \textsc{CLASSIC}, \textsc{BBCSPORT}, \textsc{OHSUMED}, and \textsc{REUTERS}.
Datasets are split into train/test as with the previous works \cite{kusner2015from, gao2016supervised}.
Table \ref{table:dataset} lists the number of unique words contained in the dataset (bag-of-words dimension) and the average number of unique words contained in a document for all real datasets.
\begin{table}[t!]
\vskip -0.15 in
\caption{Datasets used for the experiments.}
\label{table:dataset}
\vskip 0.1in
\begin{center}
\begin{small}
\begin{sc}
\begin{tabular}{lcc}
\toprule
         & bow dimension & average words \\
\midrule
TWITTER  &  6344   &   9.9  \\
CLASSIC  & 24277   &  38.6  \\
AMAZON   & 42063   &  45.0  \\
BBCSPORT & 13243   & 117    \\
OHSUMED  & 31789   &  59.2  \\
REUTERS  & 22425   &  37.1  \\
\bottomrule
\end{tabular}
\end{sc}
\end{small}
\end{center}
\vskip -0.2in
\end{table}

\subsection{Baseline Methods}
\textbf{Word Mover's Distance (WMD) \cite{kusner2015from}:} 
The document metric formulated by the optimal transport problem,
as described in Section \ref{sec:related_work}.

\textbf{Supervised Word Mover's Distance (S-WMD) \cite{gao2016supervised}:} 
Supervised metric learning based on WMD.

\textbf{Quadtree \cite{indyk2003fast}:}
To construct the tree metric,
we first obtain a randomly shifted hypercube containing all word embedding vectors. 
Next, we recursively divide the hypercube into hypercubes with half side length until there is only one word embedding vector in the hypercube. 
Each hypercube corresponds to a node, which has child nodes that correspond to hypercubes with half side length created by the split.
The tree constructed in this way is called Quadtree.
After constructing Quadtree,
we compute the tree-Wasserstein distance in Eq. \eqref{eqn:tree_wasserstein}.

\textbf{Flowtree \cite{arturs2020scalable}:}
Flowtree computes the transport plan on Quadtree, 
and then computes the cost on the ground metric.

\textbf{Tree-Sliced Wasserstein (TSW) Distance \cite{le2019tree}:} 
The TSW distance samples the tree metrics,
and then computes the average distance of tree-Wasserstein distances on these tree metrics.
A previous work \cite{le2019tree} showed that increasing the sampling size results in higher accuracy, but requires more computation time,
and recommended 10 samples.
Following this, we evaluated the TSW distance with the deepest level of the tree of 6 
and the number of child nodes of 5 with sampling numbers of 1, 5, and 10.
For sampling size, we refer to TSW-1, TSW-5, and TSW-10, respectively.

\textbf{Supervised Tree-Wasserstein (STW) Distance:}
We initialize $\mathbf{D}_1$ such that the tree whose adjacency matrix is $\mathbf{D}_1$ is a perfect 5-ary tree of depth 5,
and optimize Eq. (\ref{eqn:objective}) using Adam \cite{adam} and LARS \cite{LARS}.
After optimization, the deepest level of the tree is 5 or 6.
To select the margin $m$,
we use 20\% of the training dataset for validation.
We then train our model at a learning rate of $0.1$ and a batch size of $100$ for $30$ epochs. 
To avoid overfitting, we evaluated the STW distance using the parameters with the lowest loss in $30$ epochs of the validation dataset.

\subsection{Experimental Setup}
We use word2vec \cite{mikolov2013distributed}, which is pre-trained on Google News \footnote{\url{https://code.google.com/p/word2vec}}
as the word embedding vectors for WMD, S-WMD, Quadtree, Flowtree, and the TSW distance. 
For measuring the time consumption,
we use the public implementation \footnote{\url{https://github.com/mkusner/wmd}} of \cite{kusner2015from} for WMD
and the public implementation \footnote{\url{https://github.com/ilyaraz/ot_estimators}} of \cite{arturs2020scalable}, which is written in C++ and Python,
for Quadtree and Flowtree.
We implement S-WMD, and the TSW and STW distances in PyTorch.
The public implementation of WMD is written in C and Python and uses the algorithm developed by \cite{pele2009fast}, which requires cubic time.
Additionally, we implement WMD with Sinkhorn algorithm in PyTorch, which we refer to as WMD (Sinkhorn).
The parameter of the Sinkhorn algorithm for WMD (Sinkhorn) and our implementation of S-WMD 
is same as the public implementation \footnote{\url{https://github.com/gaohuang/S-WMD}} of \cite{gao2016supervised}.
We evaluated WMD (Sinkhorn), S-WMD, and the TSW and STW distances on Nvidia Quadro RTX 8000,
and WMD, Quadtree, and Flowtree on Intel Xeon CPU E5-2690 v4 (2.60 GHz).

\subsection{Results on the Synthetic Dataset}
\begin{figure}[t]
\vskip -0.0in
\subfigure[Quadtree]{
\includegraphics[width=0.3\columnwidth]{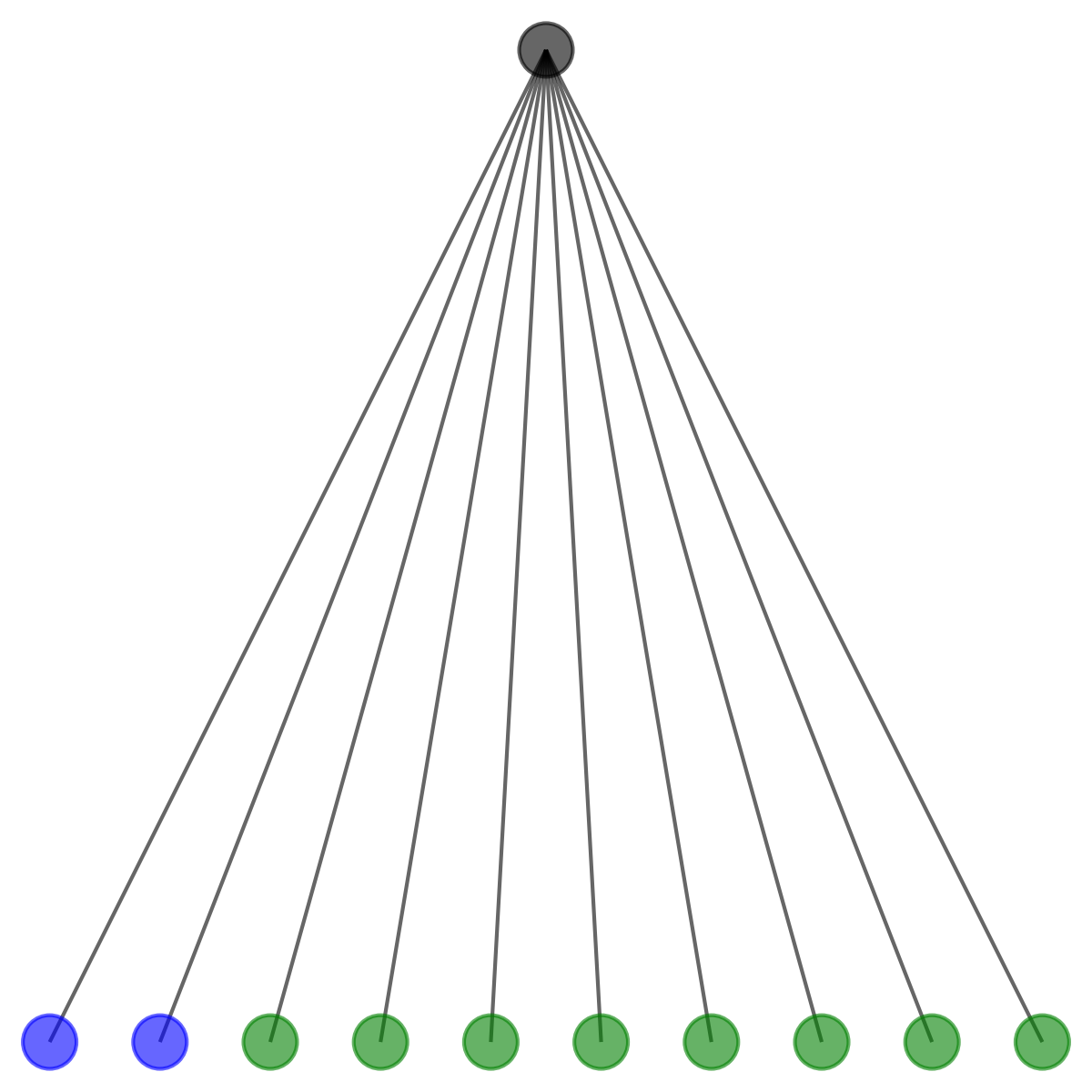}
}
\subfigure[TSW]{
\includegraphics[width=0.3\columnwidth]{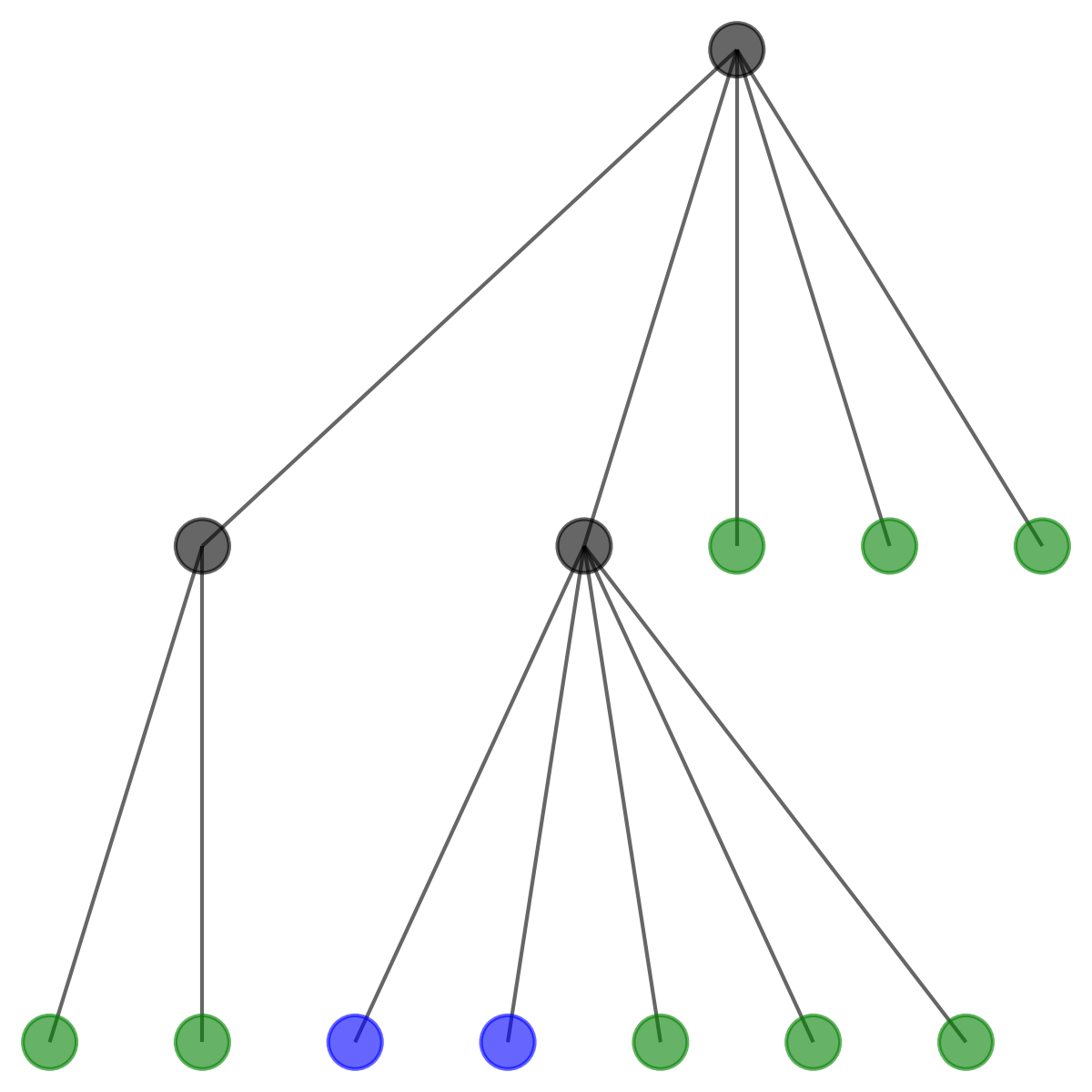}
}
\subfigure[STW]{
\includegraphics[width=0.3\columnwidth]{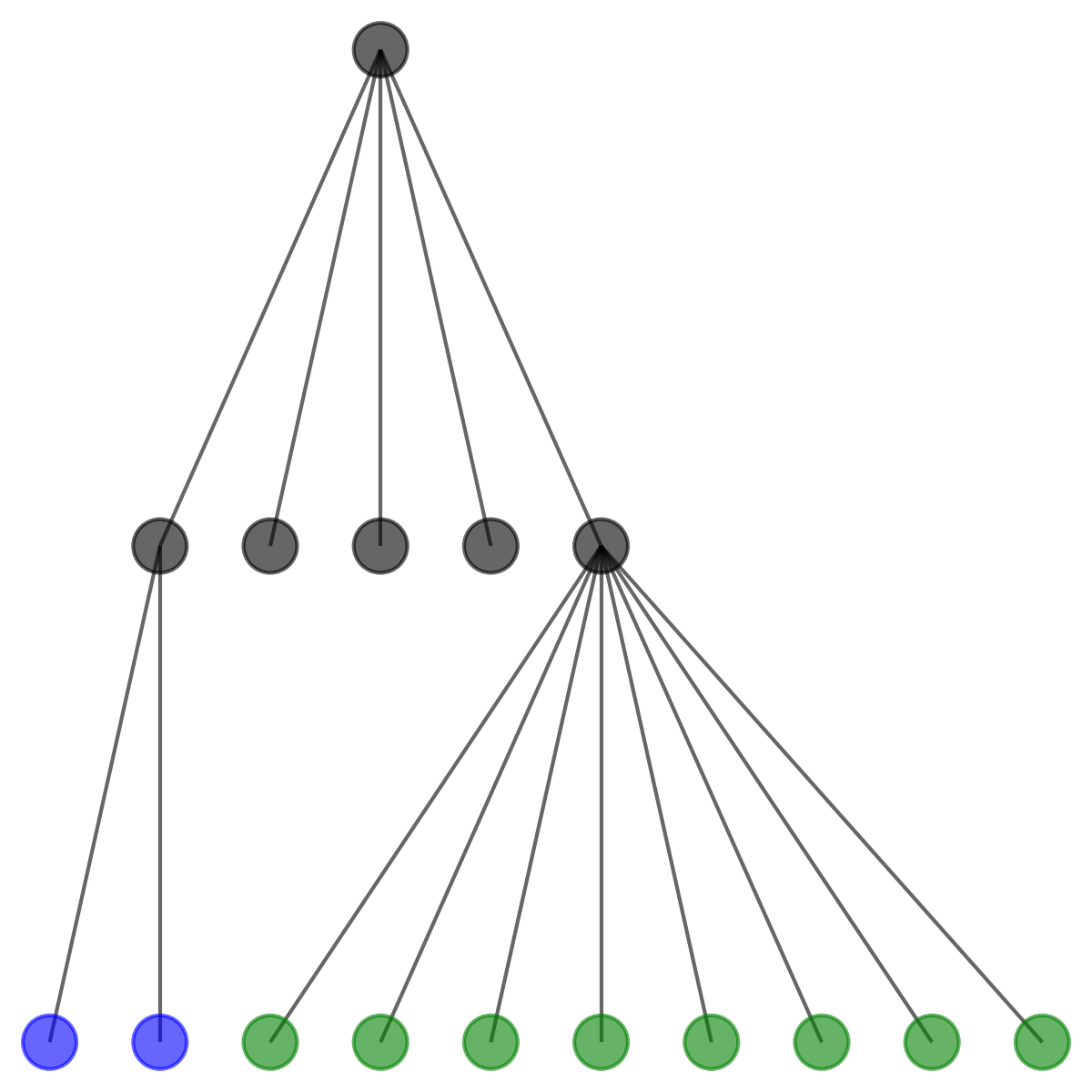}
}
\begin{center}
\vskip -0.15 in
\caption{Trees constructed by Quadtree, the TSW distance, and the STW distance on the synthetic dataset. Flowtree computes the optimal flow on Quadtree. Nodes that correspond to internal nodes are black-filled; nodes that correspond to the words ``piano'' and ``violin'' are blue-filled; and others are green-filled.}
\label{fig:visualization_of_synthetic}
\end{center}
\vskip -0.3in
\end{figure}
\begin{table}[t]
\caption{The $k$NN test error rate on the synthetic dataset.}
\label{table:accuracy_on_syntheric}
\vskip 0.1in
\begin{center}
\begin{small}
\begin{sc}
\begin{tabular}{cccc}
\toprule
Quadtree & Flowtree & TSW-1/5/10 & STW \\
\midrule
0.3 & 1.6 & 7.5 / 4.2 / 3.9 & 0.0 \\
\bottomrule
\end{tabular}
\end{sc}
\end{small}
\end{center}
\vskip -0.2in
\end{table}
\begin{table*}[t!]
\vskip -0.1 in
\caption{The $k$NN test error for real datasets.
         WMD and S-WMD give the results from \cite{gao2016supervised}.}
\label{table:accuracy}
\vskip 0.1in
\begin{center}
\begin{small}
\begin{sc}
\begin{tabular}{lcccccc}
\toprule
                  & TWITTER        & AMAZON        &       CLASSIC &      BBCSPORT & OHSUMED & REUTERS \\
\midrule
WMD               & 28.7 $\pm$ 0.6 &  7.4 $\pm$ 0.3 & 2.8 $\pm$ 0.1 &  4.6 $\pm$ 0.7 &    44.5 & 3.5 \\
S-WMD             & 27.5 $\pm$ 0.5 &  5.8 $\pm$ 0.1 & 3.2 $\pm$ 0.2 &  2.1 $\pm$ 0.5 &    34.3 & 3.2 \\
Quadtree          & 30.4 $\pm$ 0.8 & 10.7 $\pm$ 0.3 & 4.1 $\pm$ 0.4 &  4.5 $\pm$ 0.5 &    44.0 & 5.2 \\
Flowtree          & 29.8 $\pm$ 0.9 &  9.9 $\pm$ 0.3 & 5.6 $\pm$ 0.6 &  4.7 $\pm$ 1.1 &    44.4 & 4.7 \\
TSW-1             & 30.2 $\pm$ 1.3 & 14.5 $\pm$ 0.6 & 5.5 $\pm$ 0.5 & 12.4 $\pm$ 1.9 &    58.4 & 7.5 \\
TSW-5             & 29.5 $\pm$ 1.1 &  9.2 $\pm$ 0.1 & 4.1 $\pm$ 0.4 & 11.9 $\pm$ 1.3 &    51.7 & 5.8 \\
TSW-10            & 29.3 $\pm$ 1.0 &  8.9 $\pm$ 0.5 & 4.1 $\pm$ 0.6 & 11.4 $\pm$ 0.9 &    51.1 & 5.4 \\
STW               & 28.9 $\pm$ 0.7 & 10.1 $\pm$ 0.7 & 4.4 $\pm$ 0.7 &  3.4 $\pm$ 0.8 &    40.2 & 4.4 \\
\bottomrule
\end{tabular}
\end{sc}
\end{small}
\end{center}
\vskip -0.1in
\end{table*}
\begin{figure*}[t!]
\vskip 0.0 in
\begin{center}
\centerline{\includegraphics[width=0.9\hsize]{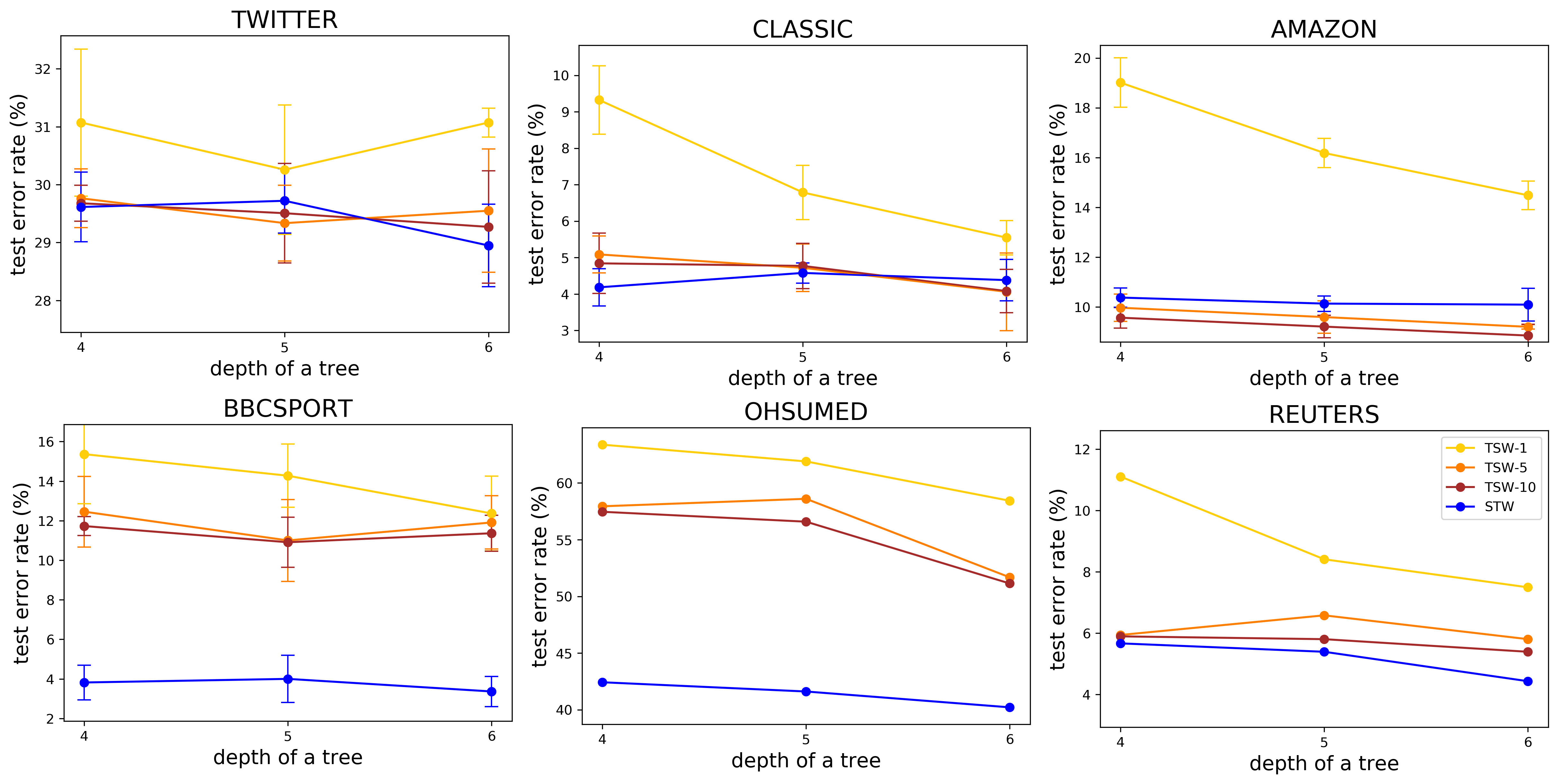}}
\vskip -0.1 in
\caption{The $k$NN test error rate on real datasets when varying the depth level of the tree. For the STW distance, if the tree consisting of internal nodes is initialized so that its depth is $d$, the depth of the tree after optimization is $d$ or $d+1$. In this figure, when the depth of the tree consisting of internal nodes is initialized such that its depth is $d$, the depth of the STW distance is considered to be $d+1$.}
\label{fig:depth}
\end{center}
\vskip -0.3in
\end{figure*}
By using the synthetic dataset, we first show that the STW distance can construct a tree metric that represents a task-specific distance 
and improves the accuracy of the document classification task.
We generated the synthetic dataset so that documents consist of only ten words: ``piano,” ``violin,” ``cello,” ``viola,” ``contrabass,” ``trumpet,” ``trombone,” ``clarinet,” ``flute,” and ``harpsichord.” Each word contains zero or one
and documents are classified into two classes based on whether the word ``piano'' or ``violin'' is contained.
We initialize $\mathbf{D}_1$ so that the tree whose adjacency matrix is $\mathbf{D}_1$ is a perfect 5-ary tree of depth 1 for easy visualization.

We show the trees constructed by Quadtree, Flowtree, the TSW and STW distances in Figure \ref{fig:visualization_of_synthetic}
and the $k$NN test error rate in Table \ref{table:accuracy_on_syntheric}.
Quadtree constructs a tree so that the distance between all words is the same because the dimension of the word embedding vector is high and each word is assigned to a different hypercube.
The TSW distance constructs a tree so that the words ``piano'' and ``violin'' are not far from other words.
However, the STW distance constructs a tree so that the words ``piano'' and ``violin'' are close and far from other words,
and the words except for the words ``piano'' and ``violin'' are close together.
As a result, the STW distance outperforms Quadtree, Flowtree, and the TSW distance.

\subsection{Results on Real Datasets}
\begin{figure*}[t!]
\vskip 0.0 in
\begin{center}
\centerline{\includegraphics[width=\hsize]{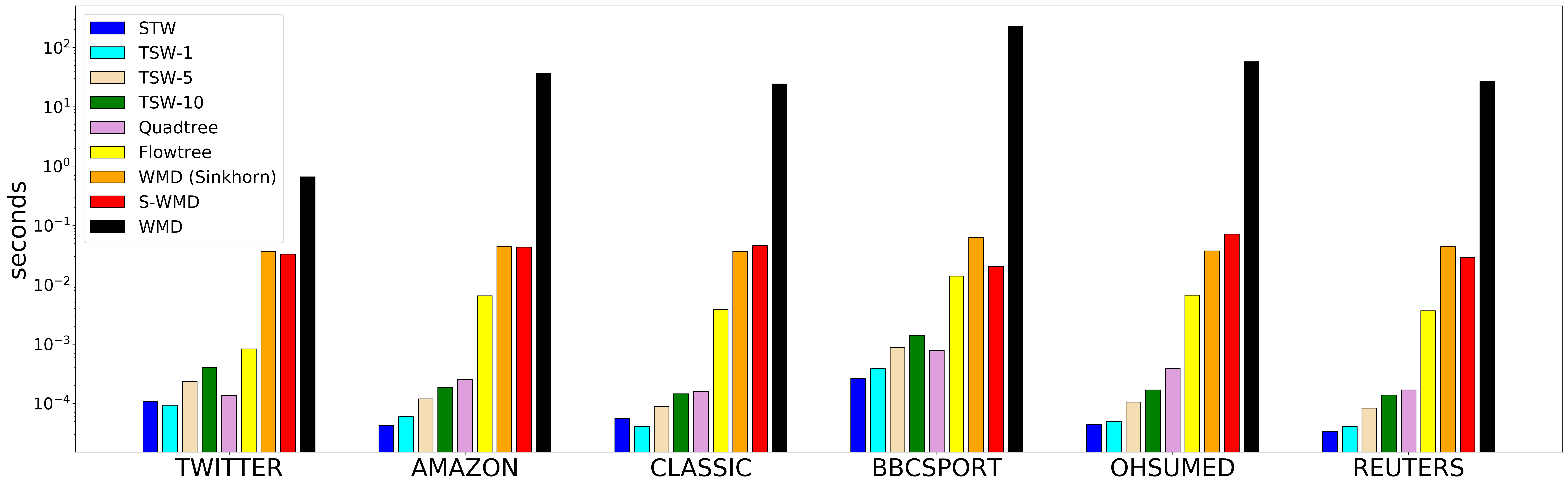}}
\vskip -0.15 in
\caption{Average time consumption for comparing $500$ documents with one document. For the STW distance and the TSW distance, the batch size is set to the number of documents contained in the training dataset. For WMD (Sinkhorn) and S-WMD, the batch size is set to $500$ due to the memory size limitations. To obtain the average time consumption, we sample $100$ documents as queries and measure the time consumption.}
\label{fig:speed}
\end{center}
\vskip -0.3in
\end{figure*}
\begin{figure}[t]
\vskip -0.0in
\begin{center}
\centerline{\includegraphics[width=\columnwidth]{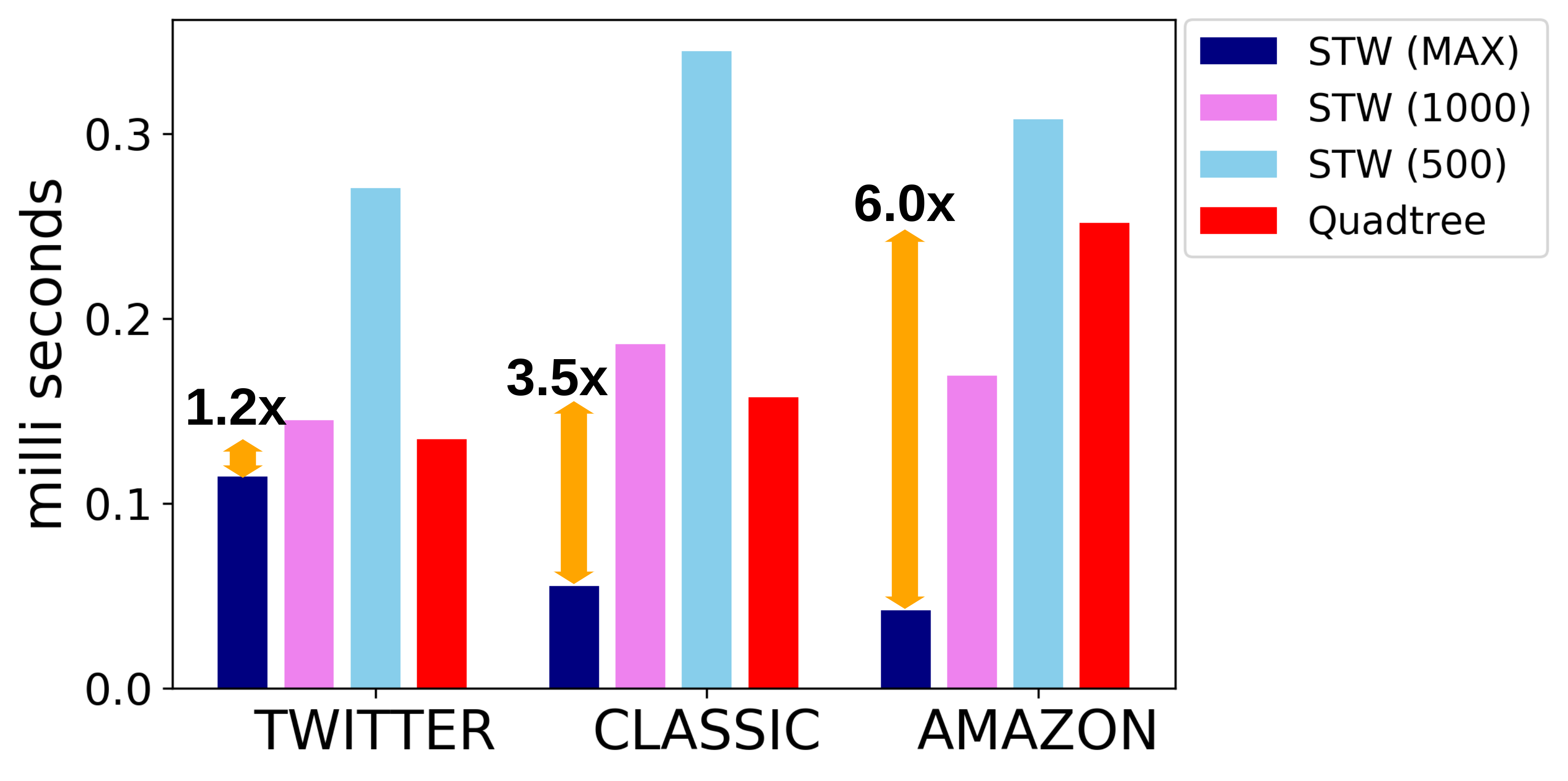}}
\vskip -0.15 in
\caption{Average time consumption to compare one document with 500 documents.
The number in the bracket indicates the batch size 
and \textit{MAX} means the number of documents contained in the training dataset.}
\label{fig:speed_batch}
\end{center}
\vskip -0.4in
\end{figure}
We first discuss the accuracy of document classification tasks on real datasets, and then discuss the time consumption to compute the distances.
We list the $k$NN test error rates in Table \ref{table:accuracy}.
On TWITTER, BBCSPORT, OHSUMED, and REUTERS,
the STW distance outperforms Quadtree, Flowtree, and the TSW distance.
On AMAZON and CLASSIC,
the STW distance outperforms the TSW-1 distance and is competitive with Quadtree, Flowtree, the TSW-5 distance, and the TSW-10 distance, respectively.
In particular, the error rate of the TSW distance is approximately $10\%$ higher than that of WMD on BBCSPORT and OHSUMED,
but the STW distance improves the error rate and outperforms WMD. 
On the other hand, the STW distance still underperforms WMD in other datasets
and all tree-based methods underperform S-WMD in all datasets.

To construct the tree metric in the TSW and STW distances,
we need to set the depth level of the tree as the hyperparameters.
We evaluated how the tree's depth level affects the accuracy of the TSW and STW distances.
In Figure \ref{fig:depth}, we show the $k$NN test error rate
when the STW distance is initialized, such that $\mathbf{D}_1$ is an adjacency matrix of the depth level of trees 3, 4, and 5,
and the TSW distance is sampled so that the depth level of the tree is 4, 5, and 6.
The results show that, in general, 
the deeper the depth level of the tree, the higher the accuracy.
When the depth level of the tree is 4, 
the accuracy of the TSW-1 distance is considerably worse than when the depth level of the tree is 6, 
whereas the STW distance is only approximately $2\%$ worse.
The results indicate that the STW distance is more accurate than the TSW-1 distance, especially when the tree is shallow.

Next, we discuss the average time consumption to calculate distance.
We show the time required to compare $500$ documents with one document in Figure \ref{fig:speed}.
Quadtree, Flowtree, and the TSW and STW distances are faster than WMD, WMD (Sinkhorn), and S-WMD on all datasets.
The TSW-10 distance calculates the tree-Wasserstein distance 10 times, which is approximately 10 times slower than Quadtree, and the TSW-1 and STW distances.
The public implementation of Quadtree uses an algorithm that is suitable for CPUs, which runs in linear time with respect to the number of unique words in the document. 
The time complexity of the implementation of the STW distance 
depends on the number of unique words in the dataset, 
but runs on a GPU and is suitable for batch processing.
Therefore, when comparing a large number of documents, our algorithm is more efficient than the existing algorithm for computing the tree-Wasserstein distance.
In Figure \ref{fig:speed_batch}, we show the average time consumption when varying the batch sizes on TWITTER, CLASSIC, and AMAZON for Quadtree and the STW distance.
The results indicate that, if the batch size is sufficiently large, the STW distance is faster than Quadtree. 
In particular, on AMAZON, when the batch size is set to the number of documents contained in the training dataset,
the STW distance is about six times faster than Quadtree.
Additional experiments when varying the batch size are included in the Appendix. 

\section{Conclusion}
In this work, we proposed the soft tree-Wasserstein distance and the supervised tree-Wasserstein distance.
The soft tree-Wasserstein distance is differentiable with respect to the probability of the parent--child relationships of a tree and is formulated only by matrix multiplications.
By using the soft tree-Wasserstein distance, we formulated the STW distance as a continuous optimization problem, 
which is end-to-end trainable and constructs the tree metric by leveraging the label information of documents.
Through the experiments on the synthetic and real datasets,
we showed that the STW distance can be computed quickly and can improve the accuracy of document classification tasks.
Furthermore, 
because the STW distance is suitable for batch processing,
it is more efficient than existing methods for computing the Wasserstein distance, especially when comparing a large number of documents.

\section*{Acknowledgement}
We thank Hisashi Kashima and Shogo Hayashi for their useful discussions.
M.Y. was supported by MEXT KAKENHI 20H04243.

\bibliography{example_paper}
\bibliographystyle{icml2021}

\clearpage
\appendix
\section{Proofs}
\subsection{Proof of Theorem \ref{th:condition_of_adjacency_matrix}}
To prove Theorem \ref{th:condition_of_adjacency_matrix},
we show the theorems presented in \cite{korte2007combinatorial}.
The number of theorems in the bracket is the number of theorems in \cite{korte2007combinatorial}.
\begin{theorem}[Theorem 2.5]
Let $\bm{V} = \{v_1, \cdots, v_N\}$ be a set of nodes and $G = (\bm{V}, \bm{E})$ be a directed graph.
Then, the following statements are equivalent:
\begin{itemize}
    \item $G$ is a directed tree with the root $v_1$.
    \item For all $v \in \bm{V}$, $(v_1, v) \not \in \bm{E}$, and for all $v \in \bm{V} \setminus \{v_1\}$, a unique $u \in V$ exists so that $(v, u) \in \bm{E}$, and $G$ contain no circuit.
\end{itemize}
\label{th:directed_tree}
\end{theorem}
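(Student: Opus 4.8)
The plan is to prove the two implications of the equivalence separately, after fixing the definition I use for the first statement. I take a \emph{directed tree with root $v_1$} to mean a digraph whose underlying undirected graph is a tree (connected and with exactly $N-1$ edges) in which every edge is oriented toward $v_1$, so that each node $v \neq v_1$ has a directed path to $v_1$; this matches the child-to-parent convention of the paper, where an edge $(v,u)$ records that $u$ is the parent of $v$. With this reading, the first bullet is a global structural statement and the second bullet is a local, combinatorial one, and the content of the theorem is that the two coincide.

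For the forward direction, assuming $G$ is a directed tree rooted at $v_1$, I would read the three conditions directly off the tree structure. The root has no parent, so no edge leaves $v_1$, giving $(v_1,v)\notin\bm{E}$ for every $v$; each $v\neq v_1$ has exactly one parent $u$, which is precisely the unique $u$ with $(v,u)\in\bm{E}$; and since the underlying undirected graph is acyclic, $G$ contains no circuit, because a directed circuit would in particular be a cycle of the underlying graph.

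The substance is the reverse direction. Assuming the three conditions, I would first count edges: the out-degrees are $0$ at $v_1$ and exactly $1$ at each of the remaining $N-1$ nodes, so $|\bm{E}| = N-1$. Next I would establish reachability of the root. Starting from an arbitrary $v\neq v_1$, follow its unique outgoing edge to its parent, then repeat, producing a directed walk $v=u_0\to u_1\to u_2\to\cdots$. Whenever the walk is at a node $u_i\neq v_1$ it has out-degree $1$, so the walk can only halt at $v_1$, the unique node of out-degree $0$; and it cannot run forever inside the finite set $\bm{V}\setminus\{v_1\}$ without repeating a node, which would create a directed circuit and contradict the no-circuit condition. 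Hence the walk terminates at $v_1$, so every node admits a directed path to $v_1$.

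This reachability forces the underlying undirected graph to be connected, and a connected graph on $N$ nodes with exactly $N-1$ edges is a tree; so $G$ is a directed tree, and the orientation is correct because the follow-the-parent path from each $v$ to $v_1$ coincides with the unique undirected tree path, oriented toward the root, making the unique outgoing edge of $v$ its parent edge. The main obstacle I anticipate is exactly this connectivity step: directed acyclicity alone does not force the underlying graph to be acyclic, so I must exploit the out-degree-one condition to run the walk argument and derive reachability of $v_1$, from which connectivity and then the tree property follow via the edge count. A minor point to pin down is the meaning of ``circuit,'' which I interpret as a directed circuit — precisely what the walk argument rules out.
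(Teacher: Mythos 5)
Your proof is correct, but note that the paper itself never proves this statement: it is imported verbatim from \cite{korte2007combinatorial} (their Theorem 2.5) and used, together with their Proposition 2.9 on topological orders, purely as a black box in the proof of the paper's Theorem 1. So the comparison here is between your self-contained argument and a bare citation. Your argument is the standard one and it is sound: the forward direction reads the three conditions off the rooted-tree structure; for the converse, the out-degree count gives $|\bm{E}| = N-1$, the follow-the-unique-outgoing-edge walk is deterministic and can only halt at $v_1$ (the sole node of out-degree zero), while a repeated node inside $\bm{V} \setminus \{v_1\}$ would yield a directed circuit, contradicting the hypothesis; hence every node reaches $v_1$, the underlying graph is connected, and connectedness with $N-1$ edges forces a tree, with the orientation pinned down because each node's unique outgoing edge must be the first edge of its tree path toward the root. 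Two points you leave implicit but which close easily: (i) to equate the undirected edge count with $|\bm{E}| = N-1$ you need that no anti-parallel pair $(u,v),(v,u)$ occurs, since such a pair collapses to one undirected edge --- but such a pair is itself a directed circuit of length two, excluded by hypothesis; (ii) your reading of ``circuit'' as a \emph{directed} circuit agrees with the source, and is also what the paper's subsequent Corollary (acyclicity from a topological order) relies on. What your route buys is self-containedness at an elementary level; what the paper's citation buys is brevity, since the equivalence is textbook material and the paper's actual contribution is Theorem 1, which translates conditions on $\mathbf{D}_{\text{par}}$ into exactly the hypotheses of this statement.
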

\begin{definition}[Definition 2.8]
Let $\bm{V} = \{v_1, \ldots, v_N\}$ be a set of nodes and $G = (\bm{V}, \bm{E})$ be a directed graph.
A \textit{topological order} of $G$ is an order of the nodes 
so that for each edge $(v_i, v_j) \in \bm{E}$, we have $i < j$.
\end{definition}
\begin{theorem}[Proposition 2.9]
A directed graph has a topological order if and only if it is acyclic.
\end{theorem}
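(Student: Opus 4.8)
The plan is to prove the two implications separately, since only the reverse direction requires real work; throughout I treat $\bm{V} = \{v_1, \ldots, v_N\}$ as finite, as in the ambient Definition 2.8.

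For the forward implication, I would assume $G$ possesses a topological order, so that its vertices are indexed $v_1, \ldots, v_N$ with every edge $(v_i, v_j) \in \bm{E}$ satisfying $i < j$. Suppose toward a contradiction that $G$ contained a directed circuit $v_{i_0} \to v_{i_1} \to \cdots \to v_{i_{k-1}} \to v_{i_0}$. Each edge forces its head index to strictly exceed its tail index, so $i_0 < i_1 < \cdots < i_{k-1}$ from the first $k-1$ edges and $i_{k-1} < i_0$ from the closing edge, giving $i_0 < i_0$, which is impossible. Hence $G$ is acyclic.

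For the reverse implication, I would argue by induction on $N = |\bm{V}|$, using as the central tool the lemma that every finite nonempty acyclic directed graph has a vertex of in-degree zero (a source). To establish the lemma I would pick an arbitrary vertex and repeatedly follow incoming edges backward; because the graph is finite, this backward walk must either terminate at a vertex with no incoming edge or revisit some vertex, and a revisited vertex would exhibit a directed circuit, contradicting acyclicity, so the walk terminates at a source. Granting the lemma, the base case $N = 1$ is immediate. For the inductive step I would take such a source, assign it index $1$, and delete it together with its incident edges; the remaining graph on $N-1$ vertices is still acyclic, so by the induction hypothesis it admits a topological order, which I relabel with indices $2, \ldots, N$. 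Reattaching the source with index $1$ preserves the ordering property on every edge: edges inside the remaining graph are already correctly ordered, every edge leaving the source points to a vertex of index at least $2 > 1$, and no edge enters the source. This yields a topological order of $G$.

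The main obstacle I anticipate is the source-existence lemma rather than the induction itself: the argument must invoke finiteness essentially, since an infinite acyclic digraph need not have a source, and the step ruling out a loop in the backward walk is precisely where acyclicity enters. Everything else — the index contradiction in the forward direction and the relabelling bookkeeping in the inductive step — is routine.
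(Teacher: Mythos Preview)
Your proof is correct and follows the standard textbook argument for this classical fact: the forward direction via the index contradiction, and the reverse direction via induction after extracting a source. There is nothing to fault in the logic or the identification of where finiteness and acyclicity are each used.

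However, there is no proof in the paper to compare against. The paper does not prove Proposition~2.9 at all: it is quoted verbatim from Korte and Vygen's \emph{Combinatorial Optimization} as a known result (the bracketed ``Proposition~2.9'' is the numbering in that reference), and is invoked only as a black box to derive Corollary~\ref{cor:acyclic}, which in turn feeds into the proof of Theorem~\ref{th:condition_of_adjacency_matrix}. So your write-up supplies strictly more than the paper does on this point; the paper's ``approach'' is simply to cite the textbook.
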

By replacing all edges $(v_i, v_j) \in \bm{E}$ with $(v_j, v_i)$,
we have the following.
\begin{corollary}
Let $\bm{V} = \{v_1, \ldots, v_N\}$ be a set of nodes and $G = (\bm{V}, \bm{E})$ be a directed graph.
If $i > j$ for all edges $(v_i, v_j) \in \bm{E}$, then $G$ is acyclic.
\label{cor:acyclic}
\end{corollary}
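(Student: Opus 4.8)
The plan is to reduce the claim to the topological-order characterization of acyclicity (Proposition 2.9) by passing to the edge-reversal of $G$, exactly as the preceding remark suggests. First I would form the reversed graph $G' = (\bm{V}, \bm{E}')$, where $\bm{E}' = \{(v_j, v_i) : (v_i, v_j) \in \bm{E}\}$. The hypothesis that $i > j$ holds for every edge $(v_i, v_j) \in \bm{E}$ then says precisely that every edge $(v_j, v_i) \in \bm{E}'$ runs from the smaller index $j$ to the larger index $i$. In the language of Definition 2.8, this is exactly the statement that the natural ordering $v_1, v_2, \ldots, v_N$ is a topological order of $G'$.

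Second, having exhibited a topological order for $G'$, I would invoke Proposition 2.9 in its ``if'' direction: a directed graph that admits a topological order is acyclic. Hence $G'$ is acyclic.

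Third, I would transfer acyclicity back to $G$. Edge reversal is an involution that sends a directed closed walk $v_{k_1} \to v_{k_2} \to \cdots \to v_{k_\ell} \to v_{k_1}$ of $G$ to the closed walk $v_{k_1} \to v_{k_\ell} \to \cdots \to v_{k_2} \to v_{k_1}$ of $G'$, and conversely; in particular $G$ contains a directed cycle if and only if $G'$ does. Since $G'$ is acyclic, so is $G$, which is the desired conclusion.

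The argument is essentially a one-line application once the reversal is set up, so I do not anticipate a genuine obstacle. The only point meriting a sentence of justification is that acyclicity is preserved under reversing all edges; this is immediate from the cycle correspondence just described, and I would state it explicitly rather than leave it implicit.
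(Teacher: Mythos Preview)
Your proposal is correct and follows exactly the approach the paper indicates: reverse all edges so that the natural index order becomes a topological order of the reversed graph, invoke Proposition~2.9 to conclude the reversed graph is acyclic, and then observe that edge reversal preserves acyclicity. The paper compresses this into the single remark ``by replacing all edges $(v_i,v_j)\in\bm{E}$ with $(v_j,v_i)$,'' and your write-up simply spells out the details that remark leaves implicit.
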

By using these theorems, we prove Theorem \ref{th:condition_of_adjacency_matrix}.
\begin{proof}
Because the adjacency matrix $\mathbf{D}_{\text{par}}$ satisfies condition (\ref{cond:eqn}) in Theorem \ref{th:condition_of_adjacency_matrix},
for all $v \in \bm{V}$, we have $(v_1, v) \not \in \bm{E}$, 
and for all $v \in \bm{V} \setminus \{v_1\}$, 
there exists a unique $u \in V$ such that $(v, u) \in \bm{E}$.
Because the adjacency matrix $\mathbf{D}_{\text{par}}$ satisfies conditions (\ref{cond:strictly_upper}) in Theorem \ref{th:condition_of_adjacency_matrix},
we have $i > j$ for all edges $(v_i, v_j) \in \bm{E}$.
Due to Corollary \ref{cor:acyclic}, $G$ is acyclic.

Therefore, $G$ is a directed tree with root $v_1$ by Theorem \ref{th:directed_tree}.
\end{proof}

\subsection{Details of Eq. \eqref{eqn:psub}}
Because $\mathbf{D}_{\text{par}}$ is a nilpotent matrix,
and $\mathbf{D}_{\text{par}}^N$ is a zero matrix, 
\begin{align*}
    (\mathbf{I} - \mathbf{D}_{\text{par}}) \sum_{k=0}^{\infty} \mathbf{D}_{\text{par}}^k &= (\mathbf{I} - \mathbf{D}_{\text{par}}) \sum_{k=0}^{N-1} \mathbf{D}_{\text{par}}^k \\
    &= \mathbf{I} - \mathbf{D}_{\text{par}}^{N} \\
    &= \mathbf{I}.
\end{align*}
Because $\mathbf{I} - \mathbf{D}_{\text{par}}$ is an upper triangular matrix and all diagonal elements are one,
$\mathbf{I} - \mathbf{D}_{\text{par}}$ is a regular matrix.
Therefore, the sum of the infinite geometric series converges to $(\mathbf{I} - \mathbf{D}_{\text{par}})^{-1}$.

\subsection{Proof of Theorem \ref{th:soft}}
\begin{proof}
Assume that the tree metric is given, and 
let $\mathbf{D}_{\text{par}}$ be its adjacency matrix.
The element in the $i$-th row and $j$-th column of the adjacency matrix to the power of $k$ is
the number of paths from $v_j$ to $v_i$ with $k$ steps.
$\mathbf{D}_{\text{par}}$ is the adjacency matrix of a tree, 
and the number of paths is at most $1$.
If there is a path from $v_j$ to $v_i$ with $k$ steps, $[\mathbf{D}_{\text{par}}^k]_{i,j}$ is one; 
otherwise, it is zero.
Then if there is a path from $v_j$ to $v_i$, 
$[(\mathbf{I} - \mathbf{D}_{\text{par}})^{-1}]_{i, j}$ is one; otherwise, it is zero.
The existence of a path from $v_j$ to $v_i$ means that $v_j$ is contained in the subtree rooted at $v_i$.
From the definition of $P_{\text{sub}}(v_j | v_i)$,
if $v_j$ is contained in the subtree rooted at $v_i$, 
$P_{\text{sub}}(v_j | v_i)$ is one; otherwise, it is zero.
We now have
\begin{align*}
    \mu(\Gamma(v)) = \sum_{u \in \Gamma(v)} \mu(u) = \sum_{u \in \bm{V}_{\text{leaf}}} P_{\text{sub}}(u | v) \mu(u).
\end{align*}
Therefore, if the tree metric is given and $\alpha$ approaches $\infty$,
the soft tree-Wasserstein distance converges to the tree-Wasserstein distance that is,
\begin{align*}
    W_{d_{\mathcal{T}}}^{\text{soft}} (\mu_i, \mu_j) &= 
    \sum_{v \in \bm{V}} w_v \left| \sum_{x \in \bm{V}_{\text{leaf}}} P_{\text{sub}}(x | v) \left( \mu_i (x) - \mu_j(x) \right) \right|_{\alpha} \\
    &= \sum_{v \in \bm{V}} w_v \left| \mu_i(\Gamma(v)) - \mu_j(\Gamma(v)) \right|_{\alpha} \\
    &\xrightarrow[\alpha \to \infty]{} W_{d_{\mathcal{T}}} (\mu_i, \mu_j)
\end{align*}
\end{proof}

\subsection{Additional Theoretical Analyses}
In the formulation of the soft tree-Wasserstein distance,
all nodes are contained in the subtree rooted at the root $v_1$.
Furthermore, every node is contained in the subtree rooted at itself.
\begin{theorem}
For all $u \in \bm{V}$,
$P_{\text{sub}} (u | v_1) = 1$. 
\end{theorem}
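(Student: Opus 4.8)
The plan is to avoid computing the inverse $(\mathbf{I} - \mathbf{D}_{\text{par}})^{-1}$ explicitly and instead to characterize its first row by a left-multiplication argument. By the definition of $P_{\text{sub}}$ in Eq.~\eqref{eqn:psub}, the quantity $P_{\text{sub}}(u \mid v_1)$ is exactly the $(1,u)$-entry of $(\mathbf{I} - \mathbf{D}_{\text{par}})^{-1}$, so the claim ``$P_{\text{sub}}(u \mid v_1) = 1$ for all $u \in \bm{V}$'' is equivalent to the statement that the entire first row of $(\mathbf{I} - \mathbf{D}_{\text{par}})^{-1}$ equals $\mathbf{1}_N^\top$. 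Writing $\mathbf{e}_1$ for the first standard basis vector, this is in turn equivalent to $\mathbf{e}_1^\top (\mathbf{I} - \mathbf{D}_{\text{par}})^{-1} = \mathbf{1}_N^\top$.

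First I would establish the companion identity $\mathbf{1}_N^\top (\mathbf{I} - \mathbf{D}_{\text{par}}) = \mathbf{e}_1^\top$. The key step is to read $\mathbf{1}_N^\top \mathbf{D}_{\text{par}}$ as the row vector of column sums of $\mathbf{D}_{\text{par}}$. The column-sum condition of Theorem~\ref{th:condition_of_adjacency_matrix}, namely $\mathbf{D}_{\text{par}}^\top \mathbf{1}_N = (0, 1, \cdots, 1)^\top$, says precisely that these column sums are $(0, 1, \cdots, 1)$, so that
\begin{equation*}
\mathbf{1}_N^\top (\mathbf{I} - \mathbf{D}_{\text{par}}) = \mathbf{1}_N^\top - (0, 1, \cdots, 1) = (1, 0, \cdots, 0) = \mathbf{e}_1^\top.
\end{equation*}

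Then I would invert. Since $\mathbf{D}_{\text{par}}$ is strictly upper triangular it is nilpotent, so $\mathbf{I} - \mathbf{D}_{\text{par}}$ is regular and $(\mathbf{I} - \mathbf{D}_{\text{par}})^{-1} = \sum_{k=0}^\infty \mathbf{D}_{\text{par}}^k$ exists, as already argued before Eq.~\eqref{eqn:psub}. Right-multiplying the identity above by $(\mathbf{I} - \mathbf{D}_{\text{par}})^{-1}$ gives $\mathbf{1}_N^\top = \mathbf{e}_1^\top (\mathbf{I} - \mathbf{D}_{\text{par}})^{-1}$, i.e.\ the first row of $(\mathbf{I} - \mathbf{D}_{\text{par}})^{-1}$ is $\mathbf{1}_N^\top$, which is the desired conclusion.

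I do not expect a genuine obstacle here: the entire argument hinges on the single observation that $\mathbf{1}_N^\top$ is a left vector that the column-sum constraint turns into $\mathbf{e}_1^\top$. The only points requiring care are bookkeeping ones---confirming that the constraint $\mathbf{D}_{\text{par}}^\top \mathbf{1}_N = (0, 1, \cdots, 1)^\top$ is being read as a statement about the columns of $\mathbf{D}_{\text{par}}$, and invoking the previously established invertibility of $\mathbf{I} - \mathbf{D}_{\text{par}}$ so that the right-multiplication is justified. As a sanity check and for intuition, the same fact can be read probabilistically: $\mathbf{D}_{\text{par}}$ defines a walk that, from any node $v_j$ with $j \ge 2$, moves to a parent with total probability one (column sum one) and strictly decreases the index (strict upper-triangularity), so the walk is absorbed at the root $v_1$ with probability one in at most $N-1$ steps, and the accumulated mass is exactly $P_{\text{sub}}(u \mid v_1)$.
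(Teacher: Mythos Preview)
Your proof is correct and follows essentially the same argument as the paper: both reduce the claim to showing the first row of $(\mathbf{I}-\mathbf{D}_{\text{par}})^{-1}$ is $\mathbf{1}_N^\top$, establish $\mathbf{1}_N^\top(\mathbf{I}-\mathbf{D}_{\text{par}}) = \mathbf{e}_1^\top$ from the column-sum condition $\mathbf{D}_{\text{par}}^\top\mathbf{1}_N = (0,1,\ldots,1)^\top$, and then right-multiply by the inverse. The probabilistic sanity check you append is a nice addition but not part of the paper's proof.
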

\begin{proof}
We prove that the elements in the first row of $(\mathbf{I} - \mathbf{D}_{\text{par}})^{-1}$ are all one.
Because $\mathbf{D}_{\text{par}}$ satisfies the conditions of Theorem \ref{th:condition_of_adjacency_matrix},
we have that
\begin{align*}
 \mathbf{1}_{N}^\top \mathbf{D}_{\text{par}} &= \left(0, 1, \ldots, 1\right), \\
 \mathbf{1}_{N}^\top (\mathbf{I} - \mathbf{D}_{\text{par}}) &= \left(1, 0, \ldots, 0\right).
\end{align*}
Since there exists the inverse matrix $(\mathbf{I} - \mathbf{D}_{\text{par}})^{-1}$,
we multiply this inverse matrix with the above equation, yielding 
\begin{align*}
 \mathbf{1}_{N}^\top  = \left(1, 0, \ldots, 0\right) (\mathbf{I} - \mathbf{D}_{\text{par}})^{-1}.
\end{align*}
Therefore, the statement is true.
\end{proof}

\begin{theorem}
For all $v \in \bm{V}$, $P_{\text{sub}}(v | v)=1$.
\end{theorem}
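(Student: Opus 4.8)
The plan is to unfold the definition of $P_{\text{sub}}(v | v)$ as a diagonal entry of $(\mathbf{I} - \mathbf{D}_{\text{par}})^{-1}$ and to show that this entry equals one by exploiting the strict upper triangularity of $\mathbf{D}_{\text{par}}$. Writing $v = v_i$, Eq.~\eqref{eqn:psub} gives $P_{\text{sub}}(v_i | v_i) = [(\mathbf{I} - \mathbf{D}_{\text{par}})^{-1}]_{i,i} = \left[\sum_{k=0}^{\infty} \mathbf{D}_{\text{par}}^k\right]_{i,i}$, so it suffices to compute the $i$-th diagonal entry of the Neumann series, which converges by the nilpotency argument already established in the discussion of Eq.~\eqref{eqn:psub}.

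Next I would separate the $k=0$ term from the rest. The term $\mathbf{D}_{\text{par}}^0 = \mathbf{I}$ contributes exactly $1$ to every diagonal entry. For $k \ge 1$, condition (\ref{cond:strictly_upper}) of Theorem~\ref{th:condition_of_adjacency_matrix} ensures that $\mathbf{D}_{\text{par}}$ is strictly upper triangular; since products of strictly upper triangular matrices remain upper triangular with a vanishing diagonal, each power $\mathbf{D}_{\text{par}}^k$ with $k \ge 1$ satisfies $[\mathbf{D}_{\text{par}}^k]_{i,i} = 0$. Summing, the diagonal entry collapses to $1 + 0 + 0 + \cdots = 1$, which is the claim.

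An equivalent and perhaps more transparent route is the path interpretation used in the proof of Theorem~\ref{th:soft}: $[\mathbf{D}_{\text{par}}^k]_{i,i}$ counts the length-$k$ walks from $v_i$ back to $v_i$. Because $G$ is a directed tree, and hence acyclic, by Theorem~\ref{th:condition_of_adjacency_matrix}, no such walk of length $k \ge 1$ exists, while the $k=0$ term records the trivial fact that $v_i$ lies in its own subtree. Either way the sum equals one.

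I expect no genuine obstacle here; the only point requiring care is justifying that $\mathbf{D}_{\text{par}}^k$ has a zero diagonal for $k \ge 1$, but this is immediate from the strict upper triangularity established in Theorem~\ref{th:condition_of_adjacency_matrix}. As a one-line alternative, I could instead observe that $\mathbf{I} - \mathbf{D}_{\text{par}}$ is upper triangular with every diagonal entry equal to one, so its inverse is again upper triangular with diagonal entries the reciprocals of those of $\mathbf{I} - \mathbf{D}_{\text{par}}$, namely all ones.
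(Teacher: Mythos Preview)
Your proof is correct. Your main argument (the Neumann-series/diagonal computation) is a genuinely different route from the paper's: the paper observes that $\mathbf{I}-\mathbf{D}_{\text{par}}$ is upper triangular with unit diagonal, hence has all eigenvalues equal to one, hence so does its inverse, and since the inverse is again upper triangular its diagonal entries equal its eigenvalues. Your Neumann-series argument is more concrete and arguably cleaner, since it avoids the detour through eigenvalues and works directly with the series used to define $P_{\text{sub}}$; the path-counting interpretation you add is also a nice conceptual gloss. Your ``one-line alternative'' at the end is in fact the paper's argument stated more directly (diagonal entries of the inverse of a triangular matrix are reciprocals of the original diagonal entries), so you have recovered the paper's proof as a special case.
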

\begin{proof}
We prove that the diagonal elements of $(\mathbf{I} - \mathbf{D}_{\text{par}})^{-1}$ are all one.
Because $\mathbf{I} - \mathbf{D}_{\text{par}}$ is an upper triangular matrix,
$(\mathbf{I} - \mathbf{D}_{\text{par}})^{-1}$ is an upper triangular matrix.
Because $\mathbf{I} - \mathbf{D}_{\text{par}}$ is an upper triangular matrix and all diagonal elements are one,
all its eigenvalues are one.
Then all eigenvalues of $(\mathbf{I} - \mathbf{D}_{\text{par}})^{-1}$ are one.
Therefore, the diagonal elements of $(\mathbf{I} - \mathbf{D}_{\text{par}})^{-1}$ are all one.
\end{proof}

\section{Additional Experimental Results}
\subsection{Additional Analyses of Batch Size}
Figure \ref{fig:batch_all_tree} presents the time consumption of the tree-based methods when varying the batch size on AMAZON.
Figure \ref{fig:batch_all_tree_exp_flowtree} illustrates the time consumption of the tree-based methods except for Flowtree. 
The results show that the time consumption of Quadtree increases linearly with the number of documents to be compared.
However, the time consumption for the STW distance to compute a single batch is almost the same even if the batch size increases.
As a result, if the batch size is sufficiently large, the STW distance is faster than that of Quadtree.
Note that we implement the TSW distance by using the same formulation as the STW distance, 
which can be computed on a GPU.
Figure \ref{fig:speed_batch_500}, \ref{fig:speed_batch_1000}, \ref{fig:speed_batch_2500}, and \ref{fig:speed_batch_5000} show the time consumption of all baseline methods and the STW distance when the batch size is varied from $500$, $1000$, $2500$, and $5000$.
We omit datasets that contain only the number of training data below the batch size.

\subsection{Additional Analyses of Depth Level}
For the TSW and STW distances,
we need to set the depth level of the tree as the hyperparameters. 
Figure \ref{fig:speed_depth} shows the time required to compare one document with $500$ documents
of the TSW and STW distances when varying the tree's depth level.
The results show that, even if the depth level of the tree increases, the time consumption is almost the same.

\subsection{Time Consumption on CPU}
In this section,
we show the time consumption of the STW distance on a CPU.
We implement the STW distance with sparse matrix multiplications in SciPy.
Table \ref{table:sparse} shows the time consumption of the STW distance with sparse matrix multiplications on a CPU.
Unfortunately, the results indicate that the STW distance with sparse matrix multiplications is slower than Quadtree.
However, Quadtree is written in C++ and highly tuned. 
That is, if we implement the STW distance in the same way as Quadtree,
the STW distance can be computed as fast as Quadtree on a CPU.

\begin{table*}[b!]
\vskip -0.1 in
\caption{Average time consumption to compare one document with $500$ documents on a CPU [ms].}
\label{table:sparse}
\begin{center}
\vskip 0.1in
\begin{small}
\begin{tabular}{lcccccc}
\toprule
 & TWITTER & AMAZON & CLASSIC & BBCSPORT & OHSUMED & REUTERS  \\
\midrule
Quadtree     & 0.13 & 0.25 & 0.16 & 0.77 & 0.39 & 0.17\\
STW (sparse) & 1.78 & 4.79 & 3.77 & 7.65 & 6.42 & 4.49\\
\bottomrule
\end{tabular}
\end{small}
\end{center}
\vskip -0.2in
\end{table*}

\subsection{Analyses of Soft Tree-Wasserstein Distance}
In the STW distance,
we learn the probability of the tree's parent-child relationships by using the label information of documents, 
then we select the most probable parent node for each node.
In this section, we show how this thresholding affects the accuracy.  
We refer to the STW distance with $\mathbf{D}_2$, which represents the probability of the parent-child relationship,
and smooth approximation of the L1 norm as the soft-smooth-STW distance
and the STW distance with smooth approximation of the L1 norm as the smooth-STW distance.
We show the results in Table \ref{table:soft_accuracy}.
By comparing the smooth-STW and soft-smooth-STW distances,
the results show that this thresholding reduces the accuracy by about $1\%$.

\subsection{Other Experimental Results}
We show the loss value in the training in Figure \ref{fig:loss}.

\begin{figure}[t!]
\begin{center}
\centerline{\includegraphics[width=1.0\hsize]{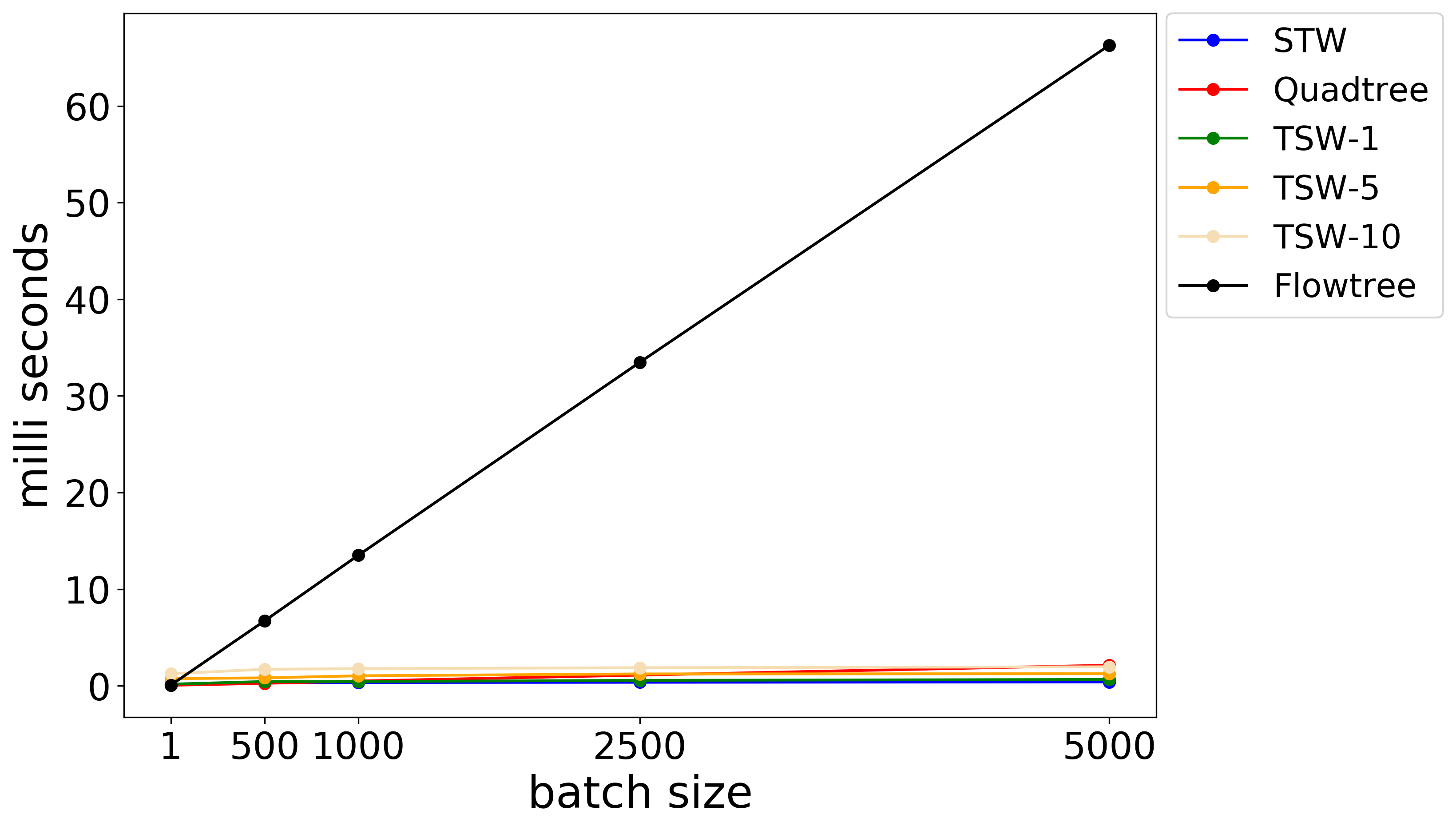}}
\caption{Average time consumption for all tree-based methods to compare one document
with the number of batch size documents on AMAZON.}
\label{fig:batch_all_tree}
\end{center}
\begin{center}
\centerline{\includegraphics[width=1.0\hsize]{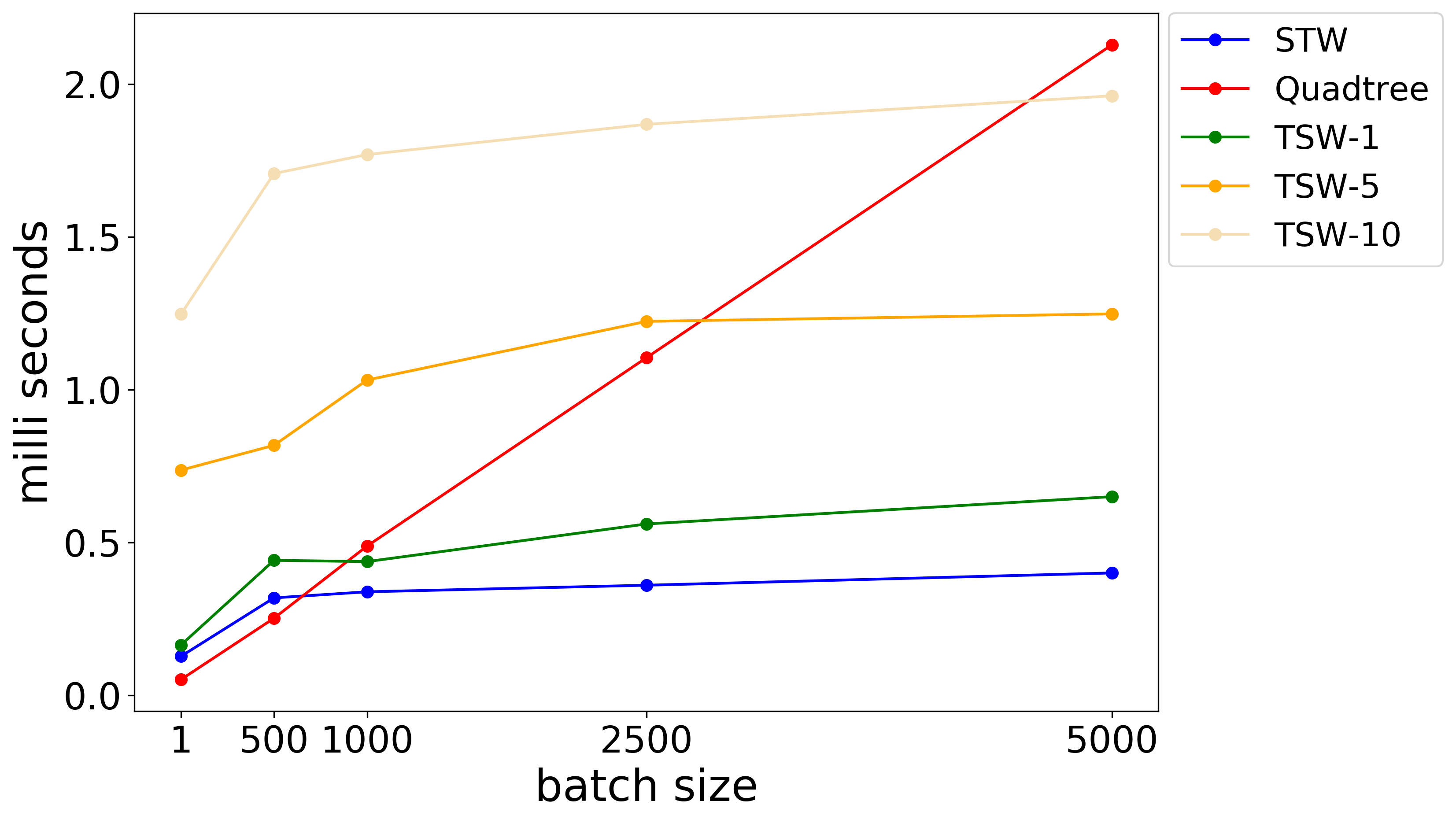}}
\caption{Average time consumption for comparing one document
with the number of batch size documents on AMAZON.}
\label{fig:batch_all_tree_exp_flowtree}
\end{center}
\vskip -0.2in
\end{figure}

\begin{figure*}[t!]
\vskip 0.2in
\begin{center}
\centerline{\includegraphics[width=\hsize]{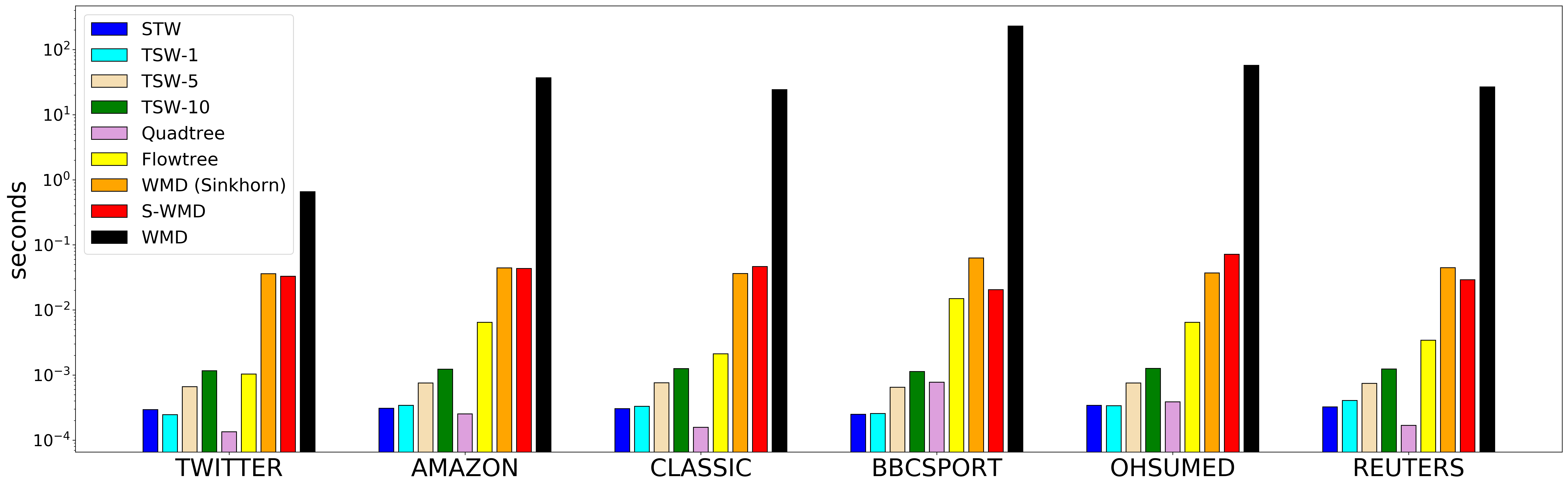}}
\caption{Average time consumption for comparing $500$ documents with one document.
For WMD (Sinkhorn), S-WMD, the STW distance, and the TSW distance, the batch size is set to $500$.}
\vskip -0.2in
\label{fig:speed_batch_500}
\end{center}
\end{figure*}

\begin{figure*}[t!]
\vskip 0.2in
\begin{center}
\centerline{\includegraphics[width=\hsize]{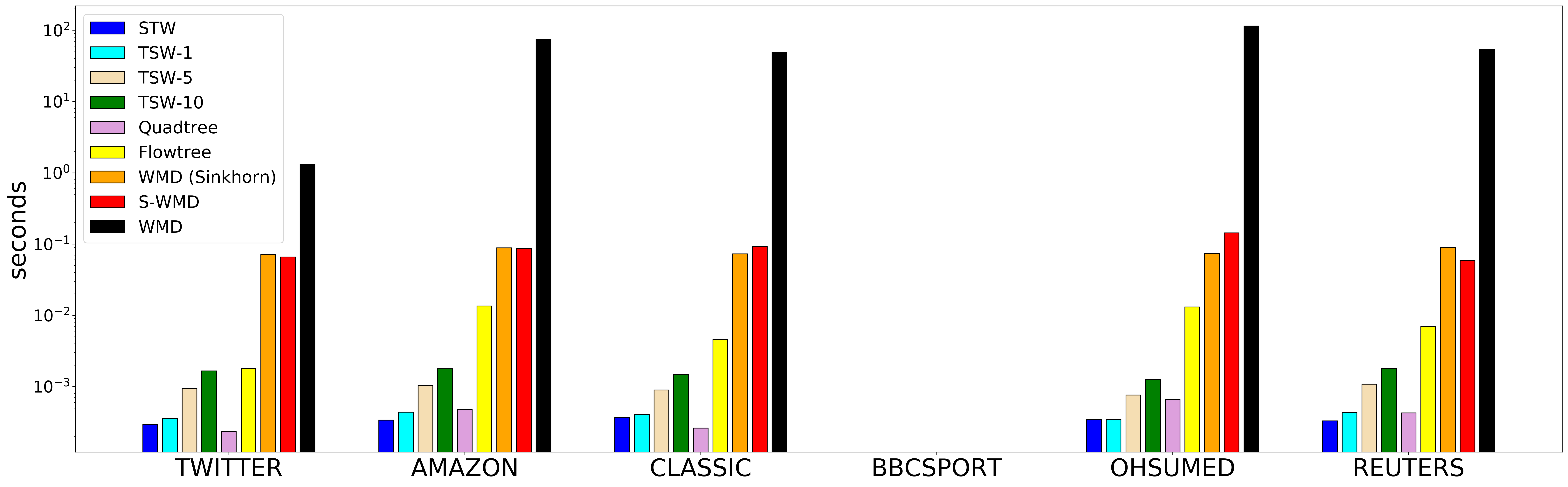}}
\caption{Average time consumption for comparing $1000$ documents with one document.
For the STW distance and the TSW distance, the batch size is set to $1000$. 
For WMD (Sinkhorn) and S-WMD, the batch size is set to $500$ due to the memory size limitations.}
\vskip -0.2in
\label{fig:speed_batch_1000}
\end{center}
\end{figure*}

\begin{figure*}[t!]
\begin{center}
\vskip 0.2in
\centerline{\includegraphics[width=\hsize]{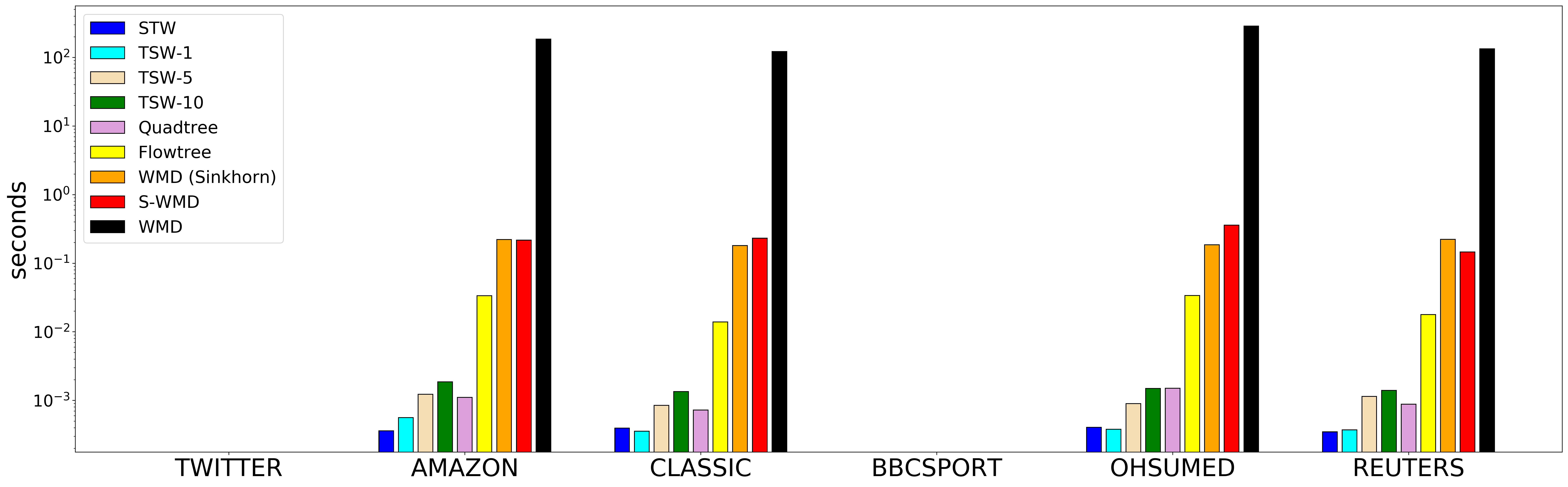}}
\caption{Average time consumption for comparing $2500$ documents with one document.
For the STW distance and the TSW distance, the batch size is set to $2500$. 
For WMD (Sinkhorn) and S-WMD, the batch size is set to $500$ due to the memory size limitations.}
\label{fig:speed_batch_2500}
\end{center}
\end{figure*}

\begin{figure*}[t!]
\begin{center}
\vskip 0.2in
\centerline{\includegraphics[width=\hsize]{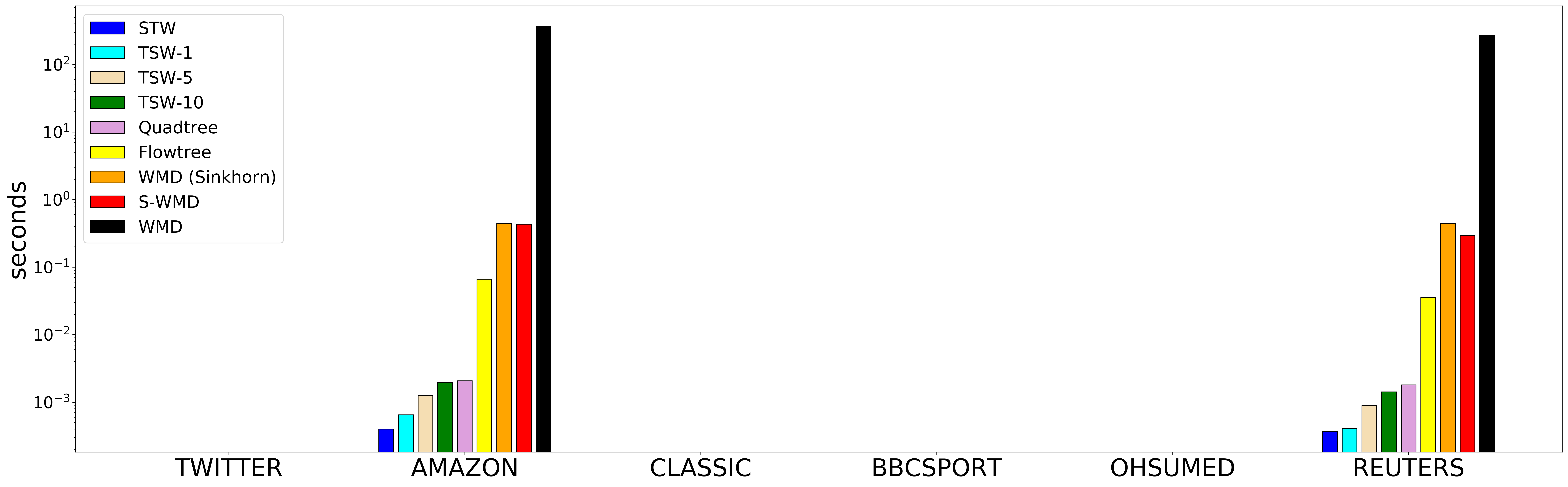}}
\caption{Average time consumption for comparing $5000$ documents with one document.
For the STW distance and the TSW distance, the batch size is set to $5000$. 
For WMD (Sinkhorn) and S-WMD, the batch size is set to $500$ due to the memory size limitations.}
\label{fig:speed_batch_5000}
\end{center}
\vskip -0.2in
\end{figure*}

\begin{figure*}[t!]
\vskip 0.2in
\begin{center}
\centerline{\includegraphics[width=\hsize]{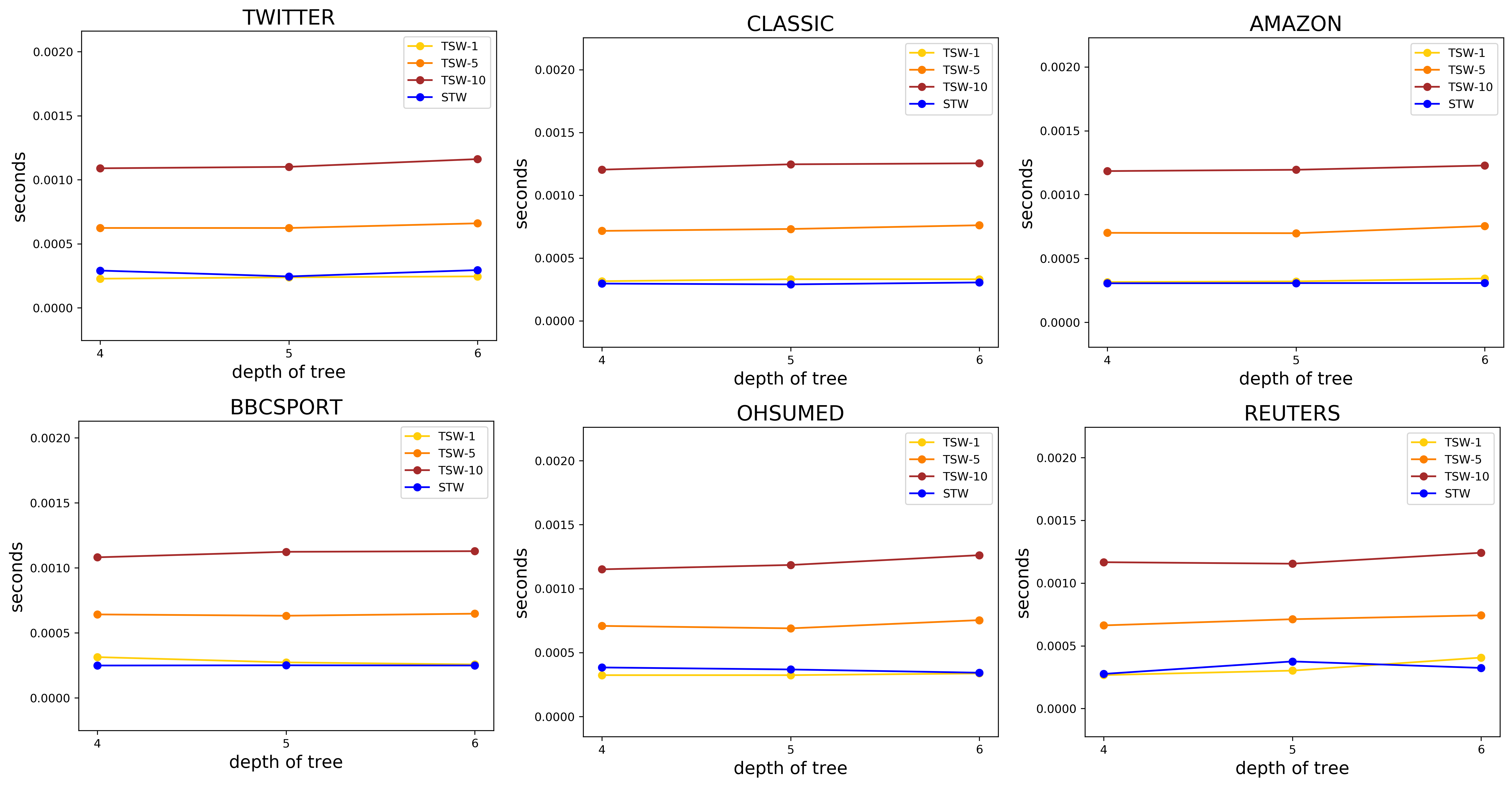}}
\caption{Average time consumption on all datasets for comparing one document with 500 documents when varying the depth level of the tree.}
\label{fig:speed_depth}
\end{center}
\vskip -0.2in
\end{figure*}

\begin{table*}[b]
\vskip -0.1 in
\caption{$k$NN test error rate. }
\label{table:soft_accuracy}
\vskip 0.1in
\begin{center}
\begin{small}
\begin{tabular}{lcccccc}
\toprule
 & TWITTER & AMAZON & CLASSIC & BBCSPORT & OHSUMED & REUTERS  \\
\midrule
soft-smooth-STW & 29.9 $\pm$ 1.3 & 8.4  $\pm$ 0.4 & 5.1 $\pm$ 0.2 & 4.5 $\pm$ 1.0 & 44.1 & 6.5 \\
smooth-STW      & 30.0 $\pm$ 0.8 & 10.6 $\pm$ 0.4 & 9.6 $\pm$ 0.9 & 4.5 $\pm$ 0.9 & 45.6 & 6.5 \\
STW             & 28.9 $\pm$ 0.7 & 10.1 $\pm$ 0.7 & 4.4 $\pm$ 0.7 & 3.4 $\pm$ 0.8 & 40.2 & 4.4 \\
\bottomrule
\end{tabular}
\end{small}
\end{center}
\vskip -0.2in
\end{table*}

\begin{figure*}[t!]
\vskip 0.2in
\begin{center}
\centerline{\includegraphics[width=\hsize]{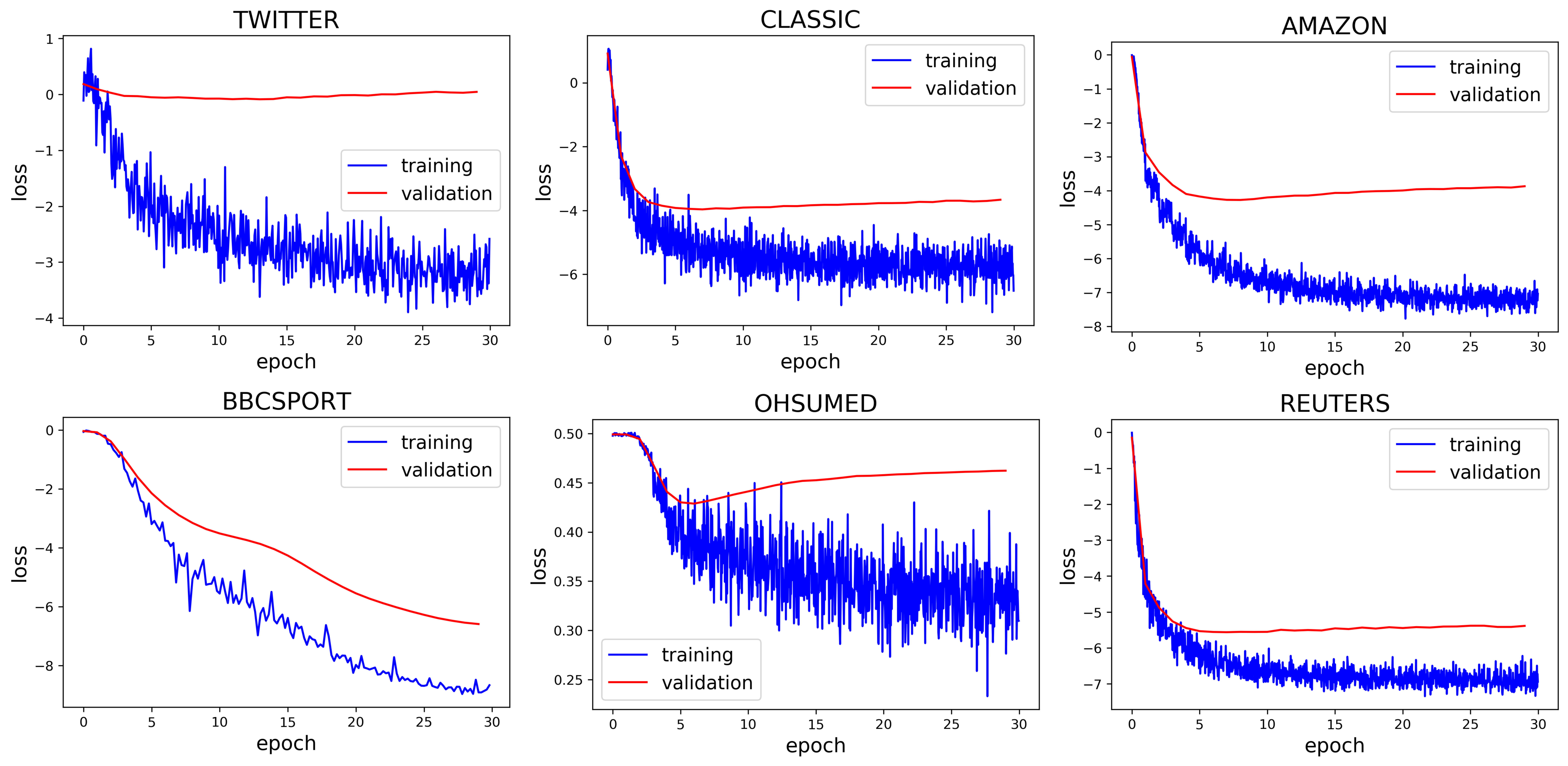}}
\caption{The loss value for all datasets.}
\label{fig:loss}
\end{center}
\vskip -0.2in
\end{figure*}


\end{document}